\newtheorem{theorem}{Theorem}[section]
\newtheorem{lemma}[theorem]{Lemma}
\newtheorem{proposition}[theorem]{Proposition}
\theoremstyle{definition}
\newtheorem{definition}[theorem]{Definition}
\DeclareMathOperator*{\argmin}{arg\,min}
\title{Provably Efficient Third-Person Imitation from Offline Observation}
\author[a]{Aaron Zweig}
\author[a, b, c]{Joan Bruna}
\affil[a]{Courant Institute of Mathematical Sciences, New York University, New York}
\affil[b]{Center for Data Science, New York University, New York}
\affil[c]{Institute for Advanced Study, Princeton}
\begin{document}

\maketitle

\begin{abstract}
	Domain adaptation in imitation learning represents an essential step towards improving generalizability.  However, even in the restricted setting of third-person imitation where transfer is between isomorphic Markov Decision Processes, there are no strong guarantees on the performance of transferred policies.  We present problem-dependent, statistical learning guarantees for third-person imitation from observation in an offline setting, and a lower bound on performance in the online setting.
\end{abstract}

\section{Introduction}

Imitation learning typically performs training and testing in the same environment.  This is by necessity as the Markov Decision Process (MDP) formalism defines a policy on a particular state space.  However, real world environments are rarely so cleanly defined and benign changes to the environment can induce a completely new state space.  Although deep imitation learning~\citep{ho2016generative} still defines a policy on unseen states, it remains extremely difficult to effectively generalize~\citep{duan2017one}.

Domain adaptation addresses how to generalize a policy defined in a source domain to perform the same task in a target domain~\citep{higgins2017darla}.  Unfortunately, this objective is inherently ill-defined.  One wouldn't expect to successfully transfer from a 2D gridworld to a self-driving car, but there is ambiguity in how to define a similarity measure on MDPs.

Third-person imitation~\citep{stadie2017third} resolves this ambiguity by considering transfer between isomorphic MDPs (formally defined in Section \ref{sec:prelim}), where the objective is to observe a policy in the source domain, and imitate that policy in the target domain.  In contrast to domain adaptation between unaligned distributions, the dynamics structure  constrains the space of possible isomorphisms, and in some cases the source and target may be related by a unique isomorphism.

We consider an idealized setting for third-person imitation with complete information about the source domain, where we perfectly understand the dynamics and the policy to be imitated. This work offers a theoretical analysis, in particular demonstrating that restricting to isomorphic MDPs with complete knowledge does not trivialize the problem. Specifically, regarding how the agent may observe the target domain, we consider two regimes, summarized in Figure \ref{fig:settings}:
\begin{itemize}
    \item In the \textbf{offline} regime (Section \ref{section:non-interactive}), an oracle perfectly transfers the source policy into the target domain, and the agent observes trajectories from the oracle policy (without seeing the oracle's actions). In this regime, we provide positive results establishing that with limited, state-only observations in the target domain, we can still efficiently imitate a policy defined in the source domain (Theorem \ref{thm:main}). 
    \item In the \textbf{online} regime (Section \ref{section:interactive}), the agent chooses policies in the target domain and draws trajectories. Our negative results in this setting (Theorem \ref{thm:negative}) prove that with full interaction in the target domain, imitation is extremely difficult in the presence of structural symmetry.
\end{itemize}
% For positive results,   For negative results, we show that 

\paragraph{A Motivating Example:}
To clarify the setup and distinguish the two observation regimes, we elaborate upon an example.
Suppose our source domain is a video game, where the state space corresponds to the monitor screen and the action space corresponds to key presses.  And we wish to imitate an expert player of the game.
The target domain is the same game played on a new monitor with higher screen brightness.  Clearly the underlying game hasn't changed, and there is a natural bijection from screen states of the target monitor to those of the source monitor, namely ``dimming the screen".

On the one hand, in the offline setting, we're forbidden from playing on the new monitor ourselves.  Instead we observe recordings of the expert, played on the brighter monitor.  Again, as these are recordings, we see the states the expert visits but not their actions.
On the other hand, in the online setting, we simply run transitions on the brighter monitor.  Note that if the screen includes benign features which minimally impact the game (say the player's chosen name appears onscreen), it may be very difficult to learn the bijection between target and source monitor.
Either way, through observations we guess a new policy to played on the bright monitor, which hopefully mimics the expert's behavior.

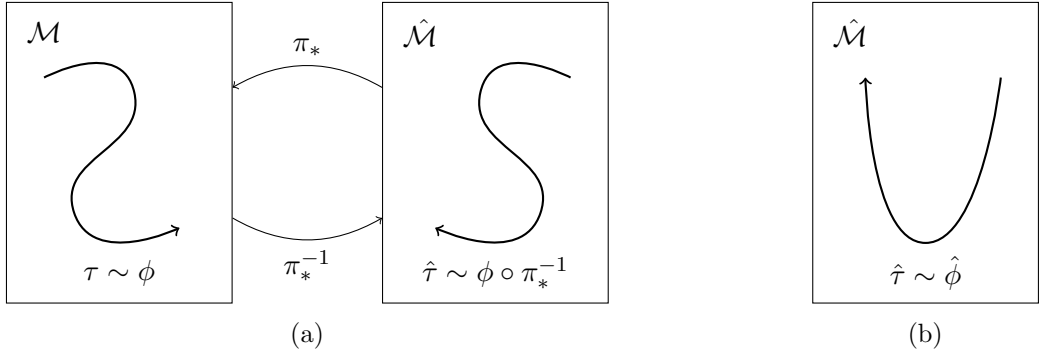
\begin{figure*}
    \centering
    \begin{subfigure}[b]{0.65\textwidth}
        \centering
        \scalebox{1}{
            \begin{tikzpicture}
                \node[label={[xshift=-1cm,yshift=1.6cm]center:$\mathcal{M}$}][draw, minimum width=3cm, minimum height=4cm](M) at (0,0) {};
                \node[label={[xshift=-1cm,yshift=1.6cm]center:$\hat{\mathcal{M}}$}][draw, minimum width=3cm, minimum height=4cm](M') at (5,0) {};
                
                \node[label={[yshift=-1.6cm]center:$\tau \sim \phi$}] (tau) at (0,0) {};
                \node[label={[yshift=-1.6cm]center:$\hat{\tau} \sim \phi \circ \pi_*^{-1}$}] (tau') at (5,0) {};

                \path[->] (M) edge[bend right, anchor=north] node {$\pi_*^{-1}$} (M');
                \path[->] (M') edge[bend right, anchor=south] node {$\pi_*$} (M);

                \begin{scope}
                \draw[black,->,thick] plot [smooth,tension=1.5]
                coordinates {(-1,1) (0.2,0.8) (-0.6, -0.8) (0.8, -1.0)};
                \end{scope}
                
                \begin{scope}
                \draw[black,->,thick] plot [smooth,tension=1.5]
                coordinates {(6,1) (4.8,0.8) (5.6, -0.8) (4.2, -1.0)};
                \end{scope}
            \end{tikzpicture}
        }
    \caption{}
    \label{fig:setting_offline}
    \end{subfigure}
    \begin{subfigure}[b]{0.33\textwidth}
    \centering
    \scalebox{1}{
        \begin{tikzpicture}
                \node[label={[xshift=-1cm,yshift=1.6cm]center:$\hat{\mathcal{M}}$}][draw, minimum width=3cm, minimum height=4cm](M') at (5,0) {};
                
                \node[label={[yshift=-1.6cm]center:$\hat{\tau} \sim \hat{\phi}$}] (tau') at (5,0) {};
                
                \begin{scope}
                \draw[black,->,thick] plot [smooth,tension=1.5]
                coordinates {(6,1) (5,-1.2) (4.2, 1.0)};
                \end{scope}
        \end{tikzpicture}
    }
    \caption{}
    \label{fig:setting_online}
    \end{subfigure}
    \caption{The observation regimes.  In the offline setting (a), the agent observes trajectories $\tau$ sampled from the policy $\phi$ that have been perfectly transferred into the isomorphic target domain.  In the online setting (b), the MDPs are still isomorphic but the agent only observes trajectories after playing their own policy $\hat{\phi}$.}
    \label{fig:settings}
\end{figure*}

\paragraph{Summary of Contributions:} Our primary contribution in this work is a provably efficient algorithm for offline third-person imitation, with an polynomial upper bound for the sample complexity necessary to control the imitation loss.  Our main technical novelty is a means of clipping the states of a Markov chain according to their stationary distribution, while preserving properties of a bijection between isomorphic chains.  We also prove an algorithm-agnostic lower bound for online third-person imitation, through reduction to bandit lower bounds.

\section{Setup}\label{sec:prelim}
\subsection{Preliminaries}
We consider a source MDP without reward $\mathcal{M} = \{S, A, P, p_0\} $, and target MDP $\hat{\mathcal{M}} = \{\hat{S}, A, \hat{P}, \hat{p}_0\}$.  
To characterize an isomorphism between $\mathcal{M}$ and $\hat{\mathcal{M}}$, we assume the existence of a bijective mapping $\pi_*: \hat{S} \rightarrow S$, such that $\hat{P}(s' | s, a) = P(\pi_*(s') | \pi_*(s), a)$ and $\hat{p}_0(s) = p_0(\pi_*(s))$.  Note that in this notation, $\pi_*$ is not a policy.

We also fix an ordering of the states $\hat{S}$ so that $\pi_*$ may be written in matrix form $\Pi_*$ as a permutation matrix.  In particular, we will overload notation to use $\pi_*$ as a permutation on $[|S|]$, such that $\pi_*(i) = j$ denotes that $\pi_*(\hat{s}_i) = s_j$.  Let $\mathcal{P}$ denote the space of $\hat{S} \rightarrow S$ permutation matrices.

A policy $\phi$ maps states to distributions on actions, but for our purposes it will be convenient to consider the policy as a matrix $\Phi: S \rightarrow S \times A$.  To relate the two notions, $\Phi$ is a block of diagonal matrices $\Phi_a: S \rightarrow S$ for each action, where $(\Phi_a)_{ii} = \phi(a | s_i)$, and $\Phi = [\Phi_{a_1} | \dots | \Phi_{a_{|A|}}]^T$.

The dynamics matrix is denoted $P: S \times A \rightarrow S$.  It can also be decomposed into blocks $P_a: S \rightarrow S$ where $(P_a)_{ij} = p(s_j | s_i, a)$, and $P = [P_{a_1}| \dots| P_{a_{|A|}}]$.

Using this notation, $\Phi^T P^T$ forms the Markov chain on $S$ induced by following policy $\phi$.  Explicitly,

\begin{equation}
    P_\phi(s' | s) = \sum_a \phi(a | s) P(s' | s, a) = \left(\Phi^T P^T \right)_{s, s'} 
\end{equation}

Note that under this notation, the dynamics and initial distribution in $\hat{\mathcal{M}}$ can be written as $\hat{P} = \Pi_*^T P (I \otimes \Pi_*)$ and $\hat{p}_0 = \Pi_*^T p_0$ respectively.
The occupancy measure $\rho_\phi$ is defined with regard to a policy, as well as the underlying dynamics and initial distribution.  Specifically, $\rho_\phi(s,a) = (1-\gamma) E_{s_0 \sim p_0, \tau \sim \Phi}\left[\sum_{i=0}^\infty \gamma^i \phi(a | s) P(s_i = s) \right]$, where the dependence on the dynamics $P$ is through the sampling of a trajectory $\tau$.

Similarly, we introduce the state-only occupancy measure $\mu_\phi(s) := \sum_a \rho_\phi(s,a)$.  We will make use of the identity $\rho_\phi(s,a) = \phi(a|s) \mu_\phi(s)$, as well as the fact that $\mu_\phi$ is the stationary distribution of the Markov chain $\Phi^T ((1-\gamma)p_0\mathbf{1}^T + \gamma P)^T$, which both follow from the constraint-based characterization of occupancy ~\citep{puterman1994markov}.

The value function for a given policy $\phi$ and reward function $R$ is defined as 
\begin{equation}
V_{\phi, R}(s) = E_{s_0 = s, \tau \sim \Phi}\left[\sum_{i=0}^\infty \gamma^i R(s_i, a_i) \right]~.    
\end{equation}
  We note the very useful identity $(1-\gamma)E_{s_0 \sim p_0}[V_{\phi, R}(s_0)] = \langle \rho_\phi, R \rangle$.

Lastly, we use the notation $\sigma_i(A)$ to denote the $i$th largest singular value of $A$.

\subsection{Observation Settings}

To begin, we're given full knowledge of the source domain $\mathcal{M}$, as well as $\Phi: S \times A \rightarrow S$ and $\rho_\Phi \in \mathbb{R}^{S \times A}$, the policy and corresponding occupancy measure we want to imitate.  We consider two settings through which we can interact with the target domain, in order to learn how to adapt $\Phi$ into this new domain.

\paragraph{Offline:} In the offline setting, we only observe the policy $\Phi_* := (I \otimes \Pi_*^T) \Phi \Pi_*$ being played in $\hat{\mathcal{M}}$.  We can consider $\Phi_*$ as an oracle for third-person imitation, as this policy exactly maps from $\hat{\mathcal{M}}$ to $\mathcal{M}$, calls $\Phi$, and maps back.  To guarantee the trajectories don't get trapped in a terminal state, we assume this agent has a $1-\gamma$ reset probability.  Through these observations, we must output a policy $\hat{\Phi}$ to be played in $\hat{\mathcal{M}}$.  We provide upper bounds for this setting in Section \ref{section:non-interactive}.

Crucially, in this setting we assume access to the states \textit{but not actions} from observed trajectories, in the imitation from observation setting~\citep{sun2019provably}.  This assumption is well-motivated.  In practice, observed trajectories from an expert often come from video, where actions are difficult to infer~\citep{liu2018imitation}.  Additionally, the problem becomes trivial with observed actions, as one may mimic the oracle's actions at each state in $\hat{S}$ without trying to understand $\Pi_*$ at all.

\paragraph{Online:} In the online setting, we define our own policy $\hat{\Phi}_t$ to play in $\hat{\mathcal{M}}$ at each timestep $t$, with full observation of the trajectories.  After $T$ total transitions we output our final policy $\hat{\Phi}$.  Intuitively, this setting allows for more varied observations in the target domain.  But without an expert oracle to demonstrate the correct state distribution, an agent in this setting may be deceived by near-symmetry in the dynamics and predict the wrong alignment.  We further highlight this difficulty in Section \ref{section:interactive}.

\subsection{Imitation Objective}

In either setting, through observations from the target domain we output a policy $\hat{\Phi}$.
The corresponding occupancy measure we denote as $\rho_{\hat{\Phi}, \Pi_*} \in \mathbb{R}^{\hat{S}\times A}$, where the subscript $\Pi_*$ reflects the dependence on the dynamics and initial distribution in $\hat{\mathcal{M}}$, namely $\Pi_*^T P (I \otimes \Pi_*)$ and $\Pi_*^T p_0$.

We measure imitation by comparing the correctly transferred policy $\Phi_*$ against the guessed policy $\hat{\Phi}$.  Explicitly, our objective is

\begin{equation}\label{objective}
    \inf_{\hat{\Phi}} g(\Phi, \Pi_*, \hat{\Phi}) := \inf_{\hat{\Phi}} TV\left((I \otimes \Pi_*^T)\rho_\Phi, \rho_{\hat{\Phi}, \Pi_*}\right)
\end{equation}

As a sanity check, we confirm that if we play $\hat{\Phi} = \Phi_* = (I \otimes \Pi_*^T) \Phi \Pi_*$, then indeed $\hat{\rho}_{\hat{\Phi}, \Pi_*} = (I \otimes \Pi_*^T) \rho_\Phi$ and the occupancies are equal.

A form of this objective with a general IPM as the distributional distance was introduced in~\citep{ho2016generative}.  To justify using this loss, note the objective can be equivalently written $\sup_{\|c\|_\infty \leq 1} E_{s \sim \Pi_*^T p_0}[V_{\phi_*, c}(s) - V_{\hat{\phi}, c}(s)] $.  In other words, minimizing imitation objective guarantees $\Phi_*$ and $\hat{\Phi}$ perform nearly as well for any reward function with a bound on maximum magnitude.

% \paragraph{Summary of Contributions:} Our primary contribution in this work is a provably efficient algorithm for offline third-person imitation, with an polynomial upper bound for the sample complexity necessary to control the imitation loss.  Our main technical novelty is a means of clipping the states of a Markov chain according to their stationary distribution, while preserving properties of a bijection between isomorphic chains.  We also prove an algorithm-agnostic lower bound for online third-person imitation, through reduction to bandit lower bounds.

\section{Related Work}

The theory of imitation learning depends crucially on what interaction is available to the agent.  Behavior cloning~\citep{bain1995framework} learns a policy offline from supervised expert data.  With online data, imitation learning can be cast as a measure matching problem on occupancy measures~\citep{ho2016generative}.  With an expert oracle, imitation learning has no-regret guarantees~\citep{ross2011reduction}.  Numerous of these algorithms for imitation learning can be adapted to the observation setting~\citep{torabi2018behavioral,torabi2018generative,yang2019imitation}.

General domain adaptation for imitation learning has a rich applied literature~\citep{pastor2009learning,tobin2017domain,ammar2015unsupervised}.  Third-person imitation specifically was formalized in~\citet{stadie2017third}, extending the method of \citet{ho2016generative} by learning domain-agnostic features.  Other deep algorithms explicitly learn an alignment between the state spaces, based on multiple tasks in the same environments ~\citep{kim2019cross} or unsupervised image alignment~\citep{gamrian2019transfer}.

The closest work to ours is~\citet{sun2019provably}, which shares the focus on imitation learning without access to actions, but differs in studying the first-person setting primarily with online feedback.  This work also takes inspiration from literature on friendly graphs~\citep{aflalo2015convex}, which characterize robustly asymmetric structure.

\section{Offline Imitation}\label{section:non-interactive}

\subsection{Markov Chain Alignment}

Because the offline setting only runs policy $\Phi_*$, and reveals no actions, it is equivalent to observing a trajectory of the state-only Markov chain induced by $\Phi_*$ in $\hat{\mathcal{M}}$.  Let us elaborate on this fact.

Define the Markov chain $M := \Phi^T ((1-\gamma)p_0\mathbf{1}^T + \gamma P)^T$, which is ergodic when restricted to the strongly connected components that intersect the initial distribution.
In $\mathcal{\hat{M}}$, the dynamics are $\Pi_*^T P (I \otimes \Pi_*)$, the oracle policy is $(I \otimes \Pi_*)^T \Phi \Pi_*$, and the initial distribution is $\Pi_*^T p_0$.  We also assume the oracle agent following $\Phi_*$ has a $1-\gamma$ reset probability.

All together, this implies our observations in the offline setting are drawn from a trajectory of $\Pi_*^T M \Pi_*$.  In summary, given full knowledge of $M$ and a trajectory sampled from $\Pi_*^T M \Pi_*$, our algorithm will seek to learn the alignment $\Pi_*$ in order to approximate $\Phi_*$, hopefully leading to low imitation loss.

\subsection{Symmetry without approximation}

As a warmup, we consider the setting with no approximation where we observe $\Pi_*^TM\Pi_*$ exactly.  To relate this chain to $M$, we can try to find symmetries, i.e. the minimizers of

\begin{equation}\label{iso}
    \argmin_{\Pi \in \mathcal{P}} \|\Pi^T M \Pi - \Pi_*^T M \Pi_* \|_F~.
\end{equation}

We can equivalently consider finding automorphisms of $M$, which may be posed as a minimization over permutation matrices $\Pi: S \rightarrow S$:

\begin{equation}\label{auto}
    \argmin_\Pi \|\Pi^T M \Pi - M \|_F~.
\end{equation}

Clearly both these objectives are minimized at 0.  Intuitively, to recover $\Pi_*$ we'd like $\Pi_*$ to be the unique minimizer of (\ref{iso}), or equivalently $I$ to be the unique minimizer of (\ref{auto}).  Hence, in order to make third-person imitation tractable, we will seek to bound (\ref{auto}) away from $0$ when $\Pi \neq I$, or in other words focus on Markov chains which are robustly asymmetric.

We introduce notation:

\begin{definition}[Rescaled transition matrix]
     For an ergodic Markov chain $M$ with stationary distribution $\mu$, let $D = diag(\mu)$ and define $L = D^{1/2}MD^{-1/2}$ as the \emph{rescaled transition matrix} of $M$.
\end{definition}
\begin{definition}[Friendly matrix]
     A matrix $A$ is \emph{friendly} if, given the singular value decomposition $A = U\Sigma V^T$, $\Sigma$ has distinct diagonal elements and $V^T\mathbf{1}$ has all non-zero elements.  Similarly, a matrix $A$ is $(\alpha, \beta)$-friendly if $\sigma_\star := \min_i \sigma_i(A) - \sigma_{i+1}(A) > \alpha$ and $V^T\mathbf{1} > \beta \mathbf{1}$ elementwise.  An ergodic Markov chain $M$ is \emph{friendly} if its rescaled transition matrix $L$ is friendly.
\end{definition}

The significance of friendliness in graphs was studied in~\citet{aflalo2015convex}, to characterize relaxations of the graph isomorphism problem.  We first confirm several friendliness properties for Markov chains still hold.

\begin{proposition}\label{commuteprop}
    For a permutation matrix $\Pi$, $M = \Pi^T M \Pi$ if and only if $D = \Pi^T D \Pi$ and $L = \Pi^T L \Pi$.
\end{proposition}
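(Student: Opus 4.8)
The plan is to prove the two directions separately, using throughout the elementary fact that for a permutation matrix $\Pi$ (so $\Pi^{-1} = \Pi^T$), the condition $A = \Pi^T A \Pi$ is equivalent to the commutation relation $\Pi A = A \Pi$. I would also record at the outset the standard identity that for a diagonal matrix $D = \mathrm{diag}(\mu)$, conjugation by a permutation merely permutes the diagonal, i.e. $\Pi D \Pi^T = \mathrm{diag}(\Pi\mu)$; hence $\Pi$ commutes with $D$ exactly when $\Pi\mu = \mu$. Because the square root and inverse square root act entrywise, $\Pi\mu = \mu$ immediately propagates to $\Pi D^{1/2} = D^{1/2}\Pi$ and $\Pi D^{-1/2} = D^{-1/2}\Pi$. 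These bookkeeping facts do most of the work.

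For the reverse direction, I assume $D = \Pi^T D \Pi$ and $L = \Pi^T L \Pi$. The first assumption gives that $\Pi$ commutes with $D^{1/2}$ and $D^{-1/2}$ by the remark above. Writing $M = D^{-1/2} L D^{1/2}$ from the definition of the rescaled transition matrix and substituting, I compute $\Pi^T M \Pi = D^{-1/2}(\Pi^T L \Pi)D^{1/2} = D^{-1/2} L D^{1/2} = M$, which is the claim. For the forward direction, I assume $M = \Pi^T M \Pi$ and aim first to recover $D = \Pi^T D \Pi$. Starting from the stationarity relation $\mu^T M = \mu^T$ and substituting $M = \Pi^T M \Pi$ gives $(\Pi\mu)^T M \Pi = \mu^T$; right-multiplying by $\Pi^T$ and using $\Pi\Pi^T = I$ together with $\mu^T\Pi^T = (\Pi\mu)^T$ yields $(\Pi\mu)^T M = (\Pi\mu)^T$. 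Thus $\Pi\mu$ is also a stationary distribution of $M$. By ergodicity the stationary distribution is unique, forcing $\Pi\mu = \mu$, and hence $\Pi D \Pi^T = \mathrm{diag}(\Pi\mu) = D$, i.e. $D = \Pi^T D \Pi$. Finally, since $\Pi$ now commutes with $D^{\pm 1/2}$, I obtain $\Pi^T L \Pi = D^{1/2}(\Pi^T M \Pi)D^{-1/2} = D^{1/2} M D^{-1/2} = L$, completing this direction.

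The only genuinely nontrivial step — and the one where the ergodicity hypothesis is indispensable — is the passage from ``$\Pi\mu$ is stationary'' to ``$\Pi\mu = \mu$'' via uniqueness of the stationary distribution; without ergodicity there could be a higher-dimensional space of invariant distributions and the argument would break. Everything else reduces to the commutation calculus for permutations and diagonal matrices and to substituting the definition $L = D^{1/2} M D^{-1/2}$, so I expect no further obstacles.
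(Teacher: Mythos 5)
Your proof is correct and follows essentially the same route as the paper: the forward direction hinges on showing that the permuted stationary vector is again stationary and invoking uniqueness for ergodic chains, and the reverse direction is the same one-line conjugation computation using that $\Pi$ commutes with $D^{\pm 1/2}$. The only cosmetic difference is that you phrase the conclusion as $\Pi\mu=\mu$ where the paper writes $\Pi^T\mu=\mu$; these are equivalent for a permutation matrix, so nothing is at stake.
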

\begin{proof}
    Suppose $M = \Pi^T M \Pi$.  If $\mu$ is the stationary distribution of $M$, then $(\mu^T \Pi) (\Pi^T M \Pi) = \mu^T \Pi$.  So by uniqueness of the stationary distribution in an ergodic chain, $\mu = \Pi^T \mu$ and therefore $D = \Pi^T D \Pi$.  Then clearly $D^{1/2} = \Pi^T D^{1/2} \Pi$ and therefore $L = \Pi^T L \Pi$.
    
    For the reverse implication, $\Pi^T M \Pi = \Pi^T D^{-1/2} L D^{1/2} \Pi = D^{-1/2} L D^{1/2} = M$.
\end{proof}

\begin{proposition}
    If $M$ is friendly, then it has a trivial automorphism group.
\end{proposition}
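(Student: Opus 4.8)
The plan is to reduce the automorphism condition to a statement about the rescaled transition matrix $L$ and then exploit its singular value decomposition. Suppose $\Pi \in \mathcal{P}$ is an automorphism, i.e. $\Pi^T M \Pi = M$. By Proposition \ref{commuteprop} this is equivalent to $\Pi^T D \Pi = D$ together with $\Pi^T L \Pi = L$, and since $\Pi$ is orthogonal the latter simply says $\Pi$ commutes with $L$. I would first upgrade this to a commutation with the symmetric matrix $L^T L$: transposing $\Pi^T L \Pi = L$ gives $\Pi^T L^T \Pi = L^T$, and multiplying the two identities yields $\Pi^T (L^T L) \Pi = L^T L$, so $\Pi$ commutes with $L^T L$ as well.

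Next I would write the SVD $L = U \Sigma V^T$, so that $L^T L = V \Sigma^2 V^T$ is the spectral decomposition of $L^T L$, with the columns $v_i$ of $V$ as eigenvectors. Because $M$ (hence $L$) is friendly, the diagonal entries of $\Sigma$ are distinct, and since singular values are nonnegative the eigenvalues $\sigma_i^2$ of $L^T L$ are distinct too. Consequently each eigenspace of $L^T L$ is one-dimensional, spanned by the corresponding $v_i$. A real orthogonal matrix commuting with $L^T L$ must preserve every eigenspace, so $\Pi v_i \in \mathrm{span}(v_i)$; combined with $\|\Pi v_i\| = \|v_i\|$ this forces $\Pi v_i = \pm v_i$. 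In matrix form $\Pi V = V S$ for some diagonal matrix $S$ with entries in $\{\pm 1\}$, or equivalently (transposing and using $\Pi^T = \Pi^{-1}$ and $S^2 = I$) $V^T \Pi = S V^T$.

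The remaining task is to rule out the sign flips, and I expect this to be the crux of the argument, since it is exactly where the second friendliness condition is used. Because $\Pi$ is a permutation matrix it fixes the all-ones vector, $\Pi \mathbf{1} = \mathbf{1}$. Applying $V^T \Pi = S V^T$ to $\mathbf{1}$ then gives $V^T \mathbf{1} = S\, V^T \mathbf{1}$, i.e. $(I - S)(V^T \mathbf{1}) = 0$. Reading this coordinatewise, $(1 - S_{ii})(V^T \mathbf{1})_i = 0$ for every $i$; since friendliness guarantees that every entry of $V^T \mathbf{1}$ is nonzero, we must have $S_{ii} = 1$ for all $i$, hence $S = I$.

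Finally, $S = I$ collapses $\Pi V = V S$ to $\Pi V = V$, and since $V$ is orthogonal (in particular invertible) I conclude $\Pi = V V^T = I$. Thus the identity is the only automorphism, so the automorphism group is trivial. The one delicate point worth flagging is a possibly zero singular value, but by working throughout with $L^T L$ and its eigenvectors $V$ rather than with $U$, distinctness of the $\sigma_i$ is all that is needed and no ambiguity arises.
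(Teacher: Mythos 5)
Your proof is correct and follows essentially the same route as the paper: reduce to commutation with $L^TL$ via Proposition \ref{commuteprop}, use the distinct singular values to get one-dimensional eigenspaces and hence $\Pi v_i = \pm v_i$, then use the nonvanishing of $V^T\mathbf{1}$ (together with $\Pi\mathbf{1}=\mathbf{1}$) to kill the sign ambiguity and conclude $\Pi = I$. The only cosmetic difference is that you rule out the sign flips directly from the nonzero entries of $V^T\mathbf{1}$ rather than invoking the paper's positive-orientation convention, which is if anything slightly cleaner.
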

\begin{proof}

Suppose $M = \Pi^T M \Pi$.  Then by Proposition \ref{commuteprop}, $\Pi^TL^TL\Pi = L^TL = V\Sigma^2V^T$.  In particular, choosing $v$ as a column of $V$, $L^TLv = \sigma^2 v$ implies $L^TL\Pi v = \sigma^2 \Pi v$.  By friendliness, every eigenspace of $L^TL$ is one-dimensional, so $\Pi v = \pm v$.  And $\mathbf{1}^T \Pi v = \mathbf{1}^T v > 0$, so $\Pi v = v$ and therefore $\Pi = I$.
\end{proof}

In what follows, for any SVD, we will always choose to orient $V$ such that $V^T \mathbf{1} \geq 0$ elementwise.

\subsection{Exact Symmetry Algorithm}

By Proposition \ref{commuteprop}, the automorphism group of $M$ is contained in the automorphism group of the rescaled transition matrix $L$.  Interpreting $L$ as a weighted graph, determining its automorphisms is at least as computationally hard as the graph isomorphism problem~\citep{aflalo2015convex}.  

In general, algorithms for graph isomorphisms optimize time complexity, whereas we are more interested in controlling sample complexity.  Nevertheless, we have the following result:

\begin{theorem}\label{thm:exactsymmetry}
    Given $M$ and $\Pi_*^TM\Pi_*$, if $M$ is a friendly Markov chain, there is an algorithm to exactly recover $\Pi_*$ in $O(|S|^3)$ time.
\end{theorem}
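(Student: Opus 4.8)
The plan is to reduce recovery of $\Pi_*$ to aligning the right singular vectors of the two rescaled transition matrices, using friendliness to make that alignment unambiguous. First I would compute, from the observed chain $N := \Pi_*^T M \Pi_*$, its stationary distribution. By the same uniqueness-of-stationary-distribution argument used in the proof of Proposition \ref{commuteprop}, the stationary distribution of $N$ is $\Pi_*^T \mu$, so its diagonal matrix is $\hat{D} = \Pi_*^T D \Pi_*$ and its rescaled transition matrix is
\begin{equation}
  \hat{L} \;=\; \hat{D}^{1/2} N \hat{D}^{-1/2} \;=\; \Pi_*^T D^{1/2} M D^{-1/2} \Pi_* \;=\; \Pi_*^T L \Pi_*~.
\end{equation}
Thus the problem becomes: given $L$ and $\Pi_*^T L \Pi_*$ for a friendly $L$, recover $\Pi_*$.

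Next I would take the singular value decompositions $L = U \Sigma V^T$ and $\hat{L} = \hat{U} \hat{\Sigma} \hat{V}^T$. Since $\hat{L} = \Pi_*^T L \Pi_*$ is an orthogonal conjugation, we have $\hat{\Sigma} = \Sigma$, and $(\Pi_*^T U) \Sigma (\Pi_*^T V)^T$ is itself an SVD of $\hat{L}$. The key step is to argue that this SVD is essentially the unique one, so that $\hat{V} = \Pi_*^T V$ once orientations are fixed. Friendliness gives distinct singular values, hence each right singular vector $v_i$ (an eigenvector of $L^T L$) spans a one-dimensional eigenspace and is determined up to sign; the orientation convention $V^T \mathbf{1} \geq 0$, which is decisive precisely because friendliness forces every entry of $V^T \mathbf{1}$ to be nonzero, fixes each of those signs. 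Crucially, since $\Pi_* \mathbf{1} = \mathbf{1}$, we have $(\Pi_*^T V)^T \mathbf{1} = V^T \mathbf{1} \geq 0$, so the convention is preserved under conjugation and the oriented right singular matrix of $\hat{L}$ is exactly $\hat{V} = \Pi_*^T V$.

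Finally, since $V$ and $\hat{V}$ are orthogonal, the identity $\hat{V} = \Pi_*^T V$ gives the closed form $\Pi_* = V \hat{V}^T$, which the algorithm outputs; indeed $V \hat{V}^T = V V^T \Pi_* = \Pi_*$. For the running time, computing the two stationary distributions, forming the two rescaled matrices, computing their SVDs, and taking the final matrix product are each $O(|S|^3)$ operations on $|S| \times |S|$ matrices, yielding the claimed bound.

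I expect the main obstacle to be the uniqueness argument for the SVD. Distinct singular values alone pin down the right singular vectors only up to sign, so one must verify that the friendliness condition $V^T \mathbf{1} > 0$ both resolves every such sign and is invariant under conjugation by $\Pi_*$ (via $\Pi_* \mathbf{1} = \mathbf{1}$); without strict nonzeroness of the entries of $V^T \mathbf{1}$ the sign could not be determined and the alignment would fail. Everything else — the identity $\hat{L} = \Pi_*^T L \Pi_*$ and the final inversion — is routine linear algebra.
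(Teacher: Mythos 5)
Your proposal is correct and takes essentially the same route as the paper: both pass to the rescaled transition matrices, use friendliness (distinct singular values together with strict positivity of $V^T\mathbf{1}$ under the fixed orientation, preserved because $\Pi_*\mathbf{1}=\mathbf{1}$) to conclude $\hat{V}=\Pi_*^TV$, and then read off $\Pi_*$. The only difference is the final extraction: you output the closed form $V\hat{V}^T$ directly, whereas the paper solves the linear assignment $\min_{\Pi\in\mathcal{P}}\|\hat{V}-\Pi^TV\|_F$ with the Hungarian algorithm --- a formulation it deliberately sets up so it can be reused in the approximate setting where $\hat{V}$ only nearly equals $\Pi_*^TV$; both steps are valid and $O(|S|^3)$ in the exact case considered here.
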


This result is a simple extension of the main result in~\citet{umeyama1988eigendecomposition}, applying the friendliness property to Markov chains rather than adjacency matrices.  But the characterization of automorphisms will be used again later to control sample complexity, when we only observe $\Pi_*^TM\Pi_*$ through sampled trajectories.

We begin with the following:

\begin{proposition}\label{svdperm}
    Given two friendly matrices decomposed as $L_1 = U_1\Sigma V_1^T$ and $L_2 = U_2 \Sigma V_2^T$, suppose $L_2 = \Pi_*^T L_1 \Pi_*$.
        Then $\Pi_*$ is the unique permutation which satisfies $V_2 = \Pi^T V_1$.
\end{proposition}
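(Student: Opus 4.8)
The plan is to manufacture a second SVD of $L_2$ by conjugating the SVD of $L_1$, then invoke uniqueness of the SVD under distinct singular values to pin down $V_2$ up to per-column signs, and finally resolve those signs using the orientation convention together with friendliness. Concretely, since $L_2 = \Pi_*^T L_1 \Pi_* = \Pi_*^T U_1 \Sigma V_1^T \Pi_* = (\Pi_*^T U_1)\Sigma(\Pi_*^T V_1)^T$, and both $\Pi_*^T U_1$ and $\Pi_*^T V_1$ are orthogonal as products of orthogonal matrices, this expression is a genuine SVD of $L_2$ carrying the very same singular value matrix $\Sigma$ (conjugation by a permutation preserves singular values, which is why the two decompositions in the statement can be written with a common $\Sigma$).

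Because $L_2$ is friendly, $\Sigma$ has distinct diagonal entries, so for each singular value the associated right singular vector is determined up to sign. Comparing the two SVDs $L_2 = U_2\Sigma V_2^T$ and $L_2 = (\Pi_*^T U_1)\Sigma(\Pi_*^T V_1)^T$ column by column, I would conclude that the $i$th column of $V_2$ equals $\epsilon_i$ times the $i$th column of $\Pi_*^T V_1$, for some signs $\epsilon_i \in \{\pm 1\}$. To fix the signs, I would multiply by $\mathbf{1}^T$ on the left: since $\Pi_*$ is a permutation we have $\mathbf{1}^T \Pi_*^T = \mathbf{1}^T$, hence $(V_2^T\mathbf{1})_i = \epsilon_i (V_1^T\mathbf{1})_i$. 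The orientation convention $V^T\mathbf{1}\geq 0$, together with the friendly property that $V^T\mathbf{1}$ has no zero entries, forces both $(V_2^T\mathbf{1})_i$ and $(V_1^T\mathbf{1})_i$ to be strictly positive, so each $\epsilon_i = +1$ and therefore $V_2 = \Pi_*^T V_1$. This establishes that $\Pi_*$ indeed satisfies the asserted identity.

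For uniqueness, suppose some permutation $\Pi$ also satisfies $V_2 = \Pi^T V_1$. Then $\Pi^T V_1 = \Pi_*^T V_1$, and since $L_1$ is square its right singular factor $V_1$ is a square orthogonal, hence invertible, matrix; right-multiplying by $V_1^{-1}$ yields $\Pi = \Pi_*$.

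I expect the main obstacle to be the careful handling of the sign ambiguity inherent to the SVD, since this is exactly where both halves of the friendliness hypothesis enter: distinctness of the singular values is what reduces the ambiguity from an arbitrary orthogonal factor to a diagonal sign matrix, and the non-vanishing of $V^T\mathbf{1}$ combined with the orientation convention is what eliminates the remaining signs. The invertibility of $V_1$ then makes uniqueness routine.
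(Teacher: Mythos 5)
Your proposal is correct and follows essentially the same route as the paper: the paper's inline commutation argument (showing $V_2^T\Pi_*^TV_1$ commutes with $\Sigma^2$, hence is a diagonal sign matrix that the orientation convention forces to be $I$) is precisely the proof of the SVD-uniqueness fact you invoke, and your sign-fixing step via $\mathbf{1}^T\Pi_*^T=\mathbf{1}^T$ is identical to the paper's. Your uniqueness argument is in fact slightly more direct than the paper's, which routes through triviality of the automorphism group of $L_2^TL_2$, whereas you simply use that the square orthogonal matrix $V_1$ is invertible; both are valid.
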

\begin{proof}
    Clearly $L_2^TL_2 = \Pi_*^T L_1^TL_1 \Pi_*$.  Rewriting with the SVD gives $V_2 \Sigma^2 V_2^T = \Pi_*^T V_1 \Sigma^2 V_1^T \Pi_*$.

    Rearranging, this implies $V_2^T \Pi_*^T V_1$ commutes with $\Sigma^2$.  Commuting with a diagonal matrix with distinct elements implies $V_2^T \Pi_*^T V_1$ is diagonal.  As this product is also unitary and real, it must be that $V_2^T \Pi_*^T V_1 = S$ where $S$ is diagonal and $S^2 = I$.
    
    Again rearranging, this implies $\mathbf{1}^TV_1 = \mathbf{1}^T \Pi_*^T V_1 = \mathbf{1}^TV_2 S$.  By the assumption on the SVD orientation, $S$ must preserve signs, therefore $S = I$, and $V_2 = \Pi_*^T V_1$.
    
    Now, suppose $V_2 = \Pi^T V_1$.  Then $L_2^TL_2 = \Pi^T L_1^TL_1 \Pi$, so $\Pi_*^T\Pi$ is an automorphism of $L_2^TL_2$ and therefore $\Pi = \Pi_*$.
\end{proof}

\begin{proof}[Proof of Theorem \ref{thm:exactsymmetry}]
    Let $L_1$ and $L_2$ be the rescaled transition matrices of $M$ and $\Pi_*^TM\Pi_*$ respectively.  Reusing the same SVD notation, by Proposition \ref{commuteprop} and \ref{svdperm}, $V_2 = \Pi_*^T V_1$.  Consider the linear assignment problem $\min_{\Pi \in \mathcal{P}} \|V_2 - \Pi^TV_1\|_F$, which may be solved in $O(|S|^3)$ time using the Hungarian algorithm~\citep{kuhn1955hungarian}.  Again by Proposition \ref{svdperm}, this linear program is minimized at 0 and recovers $\Pi_*$ as the unique minimizer.
\end{proof}

\subsection{Symmetry with approximation}

With finite sample complexity, we still know the base chain $M$ exactly, but we get empirical estimates of the permuted chain $\Pi_*^TM\Pi_*$ by running trajectories.  Specifically, $m$ samples $(X_1, \dots, X_m)$ are drawn from $\Pi_*^TM\Pi_*$, with $X_1 \sim \Pi_*^Tp_0$.

Call the empirical estimate $\hat{M}$, i.e. $\hat{M}_{ij} = \frac{N_{ij}}{N_i}$ where $N_{ij}$ counts the number of observed $i \rightarrow j$ transitions and $N_i = \sum_{j} N_{ij}$.  And the empirical stationary distribution is $\hat{\mu}$ where $\hat{\mu}_i = \frac{N_i}{\sum_j N_j}$ and $\hat{D} = diag(\hat{\mu})$.
We can characterize the approximation error of the chain and stationary distribution as $E := \Pi_*\hat{M}\Pi_*^T - M$ and $\Delta = \Pi_*\hat{D}\Pi_*^T - D$ respectively.  Note these error terms are defined in the original state space $S$.

Our goal is to use $\hat{M}$ to produce a good policy in the target space.  Say we predict the bijection is $\Pi$, and play the policy $\hat{\Phi} = (I \otimes \Pi^T)\Phi \Pi$, whereas the correct policy in the target space is $\Phi_* = (I \otimes \Pi_*^T)\Phi \Pi_*$.  We'd like to be able to control the imitation distance between these two policies when $\Pi \approx \Pi_*$.

For that purpose, define $I_t(M) = \{i \in S: \mu_i \geq t$, $\mu^T = \mu^TM\}$, where $\mu$ is the stationary distribution of $M$, so these states will be visited ``sufficiently" often.  We first show correctness of the bijection on these states suffices for good imitation.

\begin{lemma}[Policy Difference Lemma~\citep{kakade2002approximately}]
    For two policies $\phi_1, \phi_2$ in the MDP defined by $\{S, A, P, R, p_0\}$,
    \begin{align*}
        E_{s \sim p_0}[V_{\phi_1,R}(s) - V_{\phi_2,R}(s)] . & = E_{\tau \sim \phi_1, p_0}\left[\sum_{t=0} \gamma^t A_{\phi_1, \phi_2}^R(s_t) \right] = \frac{1}{1-\gamma} \langle \mu_{\phi_1}, A_{\phi_1, \phi_2}^R \rangle~,
    \end{align*}
    where $A_{\phi_1,\phi_2}^R(s) = E_{a \sim \phi_1(\cdot | s)}[E_{s' \sim P}[R(s,a) + \gamma V_{\phi_2,R}(s') - V_{\phi_2,R}(s)]]$ is the average advantage function.
\end{lemma}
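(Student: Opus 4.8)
The plan is to establish the two equalities separately, starting from the middle expression (the discounted sum of advantages), telescoping it down to the value-function difference, and then re-expressing that same sum through the occupancy measure.

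For the first equality, I would substitute the definition of the advantage into the discounted sum. Writing $A_{\phi_1,\phi_2}^R(s_t) = E_{a_t \sim \phi_1,\, s_{t+1}\sim P}[R(s_t,a_t) + \gamma V_{\phi_2,R}(s_{t+1}) - V_{\phi_2,R}(s_t)]$, the trajectory expectation $E_{\tau \sim \phi_1, p_0}[\sum_{t\geq 0}\gamma^t A_{\phi_1,\phi_2}^R(s_t)]$ splits into a reward term and a telescoping term. The reward term $E_{\tau \sim \phi_1, p_0}[\sum_{t\geq 0}\gamma^t R(s_t,a_t)]$ equals $E_{s\sim p_0}[V_{\phi_1,R}(s)]$ directly from the definition of the value function. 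The telescoping term $E_{\tau \sim \phi_1, p_0}[\sum_{t\geq 0}\gamma^t(\gamma V_{\phi_2,R}(s_{t+1}) - V_{\phi_2,R}(s_t))]$ collapses, since the partial sum up to $T$ is $\gamma^{T+1}V_{\phi_2,R}(s_{T+1}) - V_{\phi_2,R}(s_0)$, whose tail vanishes as $T\to\infty$, leaving $-E_{s\sim p_0}[V_{\phi_2,R}(s)]$. Adding the two pieces gives $E_{s\sim p_0}[V_{\phi_1,R}(s) - V_{\phi_2,R}(s)]$, the first equality.

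For the second equality, I would invoke the characterization of the occupancy measure from the Preliminaries, namely $\mu_{\phi_1}(s) = (1-\gamma)\,E_{s_0\sim p_0,\, \tau\sim\phi_1}[\sum_{t\geq 0}\gamma^t \mathbf{1}[s_t=s]]$ (the sum of $\rho_{\phi_1}$ over actions). Since $A_{\phi_1,\phi_2}^R$ depends on the state alone, I can push the trajectory expectation onto the discounted state-visitation frequencies: $E_{\tau\sim\phi_1, p_0}[\sum_{t\geq 0}\gamma^t A_{\phi_1,\phi_2}^R(s_t)] = \sum_s A_{\phi_1,\phi_2}^R(s)\, E_{\tau}[\sum_{t\geq 0}\gamma^t \mathbf{1}[s_t=s]] = \frac{1}{1-\gamma}\sum_s \mu_{\phi_1}(s)\, A_{\phi_1,\phi_2}^R(s)$, which is precisely $\frac{1}{1-\gamma}\langle \mu_{\phi_1}, A_{\phi_1,\phi_2}^R\rangle$.

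The main obstacle is purely the handling of the infinite series: interchanging the infinite sum with the expectation (Fubini--Tonelli) and showing the telescoping boundary term $\gamma^{T+1}V_{\phi_2,R}(s_{T+1})$ vanishes. Both follow from boundedness of $R$, hence of $V_{\phi_2,R}$, combined with the geometric weight $\gamma^t$ for $\gamma<1$, which renders every series absolutely convergent and uniformly bounded along trajectories. Once absolute convergence is secured, the telescoping collapse and the reordering into occupancy-measure form are routine.
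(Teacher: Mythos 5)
Your proof is correct: the telescoping of the discounted advantage sum and the reordering via the discounted state-visitation frequencies both go through, and the convergence issues you flag are indeed handled by boundedness of $R$ and $\gamma<1$. The paper does not prove this lemma itself --- it cites it from \citet{kakade2002approximately} --- and your argument is exactly the standard proof given there, consistent with the paper's definitions of $V_{\phi,R}$ and $\mu_\phi$ (including the $(1-\gamma)$ normalization that produces the $\tfrac{1}{1-\gamma}$ factor).
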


\begin{theorem}\label{ipmboundthm}
    Suppose $\pi^{-1}(s_i) = \pi_*^{-1}(s_i)$ for $i \in I_t(M)$.  Then $g(\Phi, \Pi_*, \hat{\Phi}) \leq  \frac{2t|S|}{(1-\gamma)^2}$.       
\end{theorem}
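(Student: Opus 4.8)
The plan is to dualize the total-variation objective into the value-function form noted just after its definition and then invoke the Policy Difference Lemma, so that the whole quantity collapses onto the occupancy mass placed on the states where the two transferred policies disagree. Concretely, I would start from the equivalent representation
\begin{equation*}
g(\Phi,\Pi_*,\hat\Phi) = \sup_{\|c\|_\infty \leq 1} E_{s\sim\Pi_*^T p_0}\big[V_{\phi_*,c}(s) - V_{\hat\phi,c}(s)\big],
\end{equation*}
which is stated in the setup, and fix an arbitrary reward $c$ with $\|c\|_\infty\leq 1$; it suffices to bound the value gap uniformly in $c$. Applying the Policy Difference Lemma in the target MDP $\hat{\mathcal{M}}$ with $\phi_1=\phi_*$ and $\phi_2=\hat\phi$ rewrites this gap as $\frac{1}{1-\gamma}\langle \mu_{\phi_*}, A^c_{\phi_*,\hat\phi}\rangle$, where $\mu_{\phi_*}$ is the state-occupancy of $\phi_*$ in $\hat{\mathcal{M}}$.

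The key structural step is that the average advantage $A^c_{\phi_*,\hat\phi}(\hat s)$ vanishes at every state where the two policies coincide: if $\phi_*(\cdot\mid\hat s)=\hat\phi(\cdot\mid\hat s)$, then $E_{a\sim\phi_*(\cdot\mid\hat s)}[R(\hat s,a)+\gamma\, E_{s'}V_{\hat\phi,c}(s')]$ is exactly the Bellman backup of $V_{\hat\phi,c}$ under $\hat\phi$, so the bracketed advantage reduces to $V_{\hat\phi,c}(\hat s)-V_{\hat\phi,c}(\hat s)=0$. Hence only the disagreement states contribute to the inner product. I would then identify these states: since $\hat\phi$ and $\phi_*$ route a target state $\hat s$ through $\pi(\hat s)$ and $\pi_*(\hat s)$ respectively, they agree at $\hat s=\pi_*^{-1}(s_i)$ precisely when $\pi^{-1}(s_i)=\pi_*^{-1}(s_i)$, which by hypothesis holds for all $i\in I_t(M)$. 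Disagreement is therefore confined to preimages $\pi_*^{-1}(s_i)$ with $i\notin I_t(M)$, i.e.\ with $\mu_\phi(s_i)<t$.

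To finish I would quantify the occupancy on this disagreement set. Because the stationary distribution of $M$ in the definition of $I_t(M)$ is exactly the source occupancy $\mu_\phi$, and because $\phi_*$ is the correctly transferred policy, the sanity-check identity $\rho_{\Phi_*,\Pi_*}=(I\otimes\Pi_*^T)\rho_\Phi$ gives $\mu_{\phi_*}=\Pi_*^T\mu_\phi$, so $\mu_{\phi_*}(\pi_*^{-1}(s_i))=\mu_\phi(s_i)$. Summing over the disagreement states yields $\sum_{i\notin I_t(M)}\mu_\phi(s_i)\leq t|S|$. Pairing this with the pointwise bound $\|A^c_{\phi_*,\hat\phi}\|_\infty\leq \tfrac{2}{1-\gamma}$ (each of $R$, $\gamma V_{\hat\phi,c}(s')$, and $V_{\hat\phi,c}(\hat s)$ is bounded in magnitude by a geometric series with reward at most $1$), the inner product is at most $\tfrac{2}{1-\gamma}\cdot t|S|$, and the leading $\tfrac{1}{1-\gamma}$ produces $g\leq \tfrac{2t|S|}{(1-\gamma)^2}$. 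Since the chain of inequalities is uniform in $c$, it survives the supremum.

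The main obstacle I anticipate is not any single calculation but the bookkeeping that ties the threshold measure $\mu$ in $I_t(M)$ to the occupancy $\mu_{\phi_*}$ appearing in the Policy Difference Lemma: one must recognize that the stationary distribution of $M$ coincides with the source occupancy $\mu_\phi$ and that $\mu_{\phi_*}=\Pi_*^T\mu_\phi$, since this is precisely what converts the hypothesis ``$\pi$ and $\pi_*$ agree on frequently visited source states'' into ``the disagreement set carries occupancy mass at most $t|S|$ in the target.'' A secondary point to handle carefully is verifying the advantage vanishes using the \emph{average} advantage as defined, rather than an action-wise advantage.
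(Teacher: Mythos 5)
Your proposal is correct and follows essentially the same route as the paper's proof: dualize the total variation into the value-gap form, apply the Policy Difference Lemma, observe the average advantage vanishes on the agreement set $\pi_*^{-1}(I_t(M))$, and bound the residual by $t|S|\cdot\frac{2}{1-\gamma}$ times the leading $\frac{1}{1-\gamma}$. The bookkeeping you flag ($\mu_{\phi_*}=\Pi_*^T\mu_\phi$ with $\mu_\phi$ the stationary distribution of $M$) is exactly the step the paper makes explicit.
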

\begin{proof}

    First we decompose the objective
    
    \begin{align*}
    g(\Phi, \Pi_*, \hat{\Phi}) & = TV((I \otimes \Pi_*^T)\rho_\Phi, \rho_{\hat{\Phi}, \Pi_*})\\
    & = \sup_{\|c\|_\infty \leq 1} \langle \rho_{\Phi_*, \Pi_*} - \rho_{\hat{\Phi}, \Pi_*}, c \rangle\\
    & = \sup_{\|c\|_\infty \leq 1} E_{\hat{s} \sim \Pi_*^T p_0}[V_{\phi_*, c}(\hat{s}) - V_{\hat{\phi}, c}(\hat{s})]~.
    \end{align*}

    From the assumption and the definition of $\Phi_*$ and $\hat{\Phi}$, we have $\phi_*(\cdot | \hat{s}_i) = \hat{\phi}(\cdot | \hat{s}_i)$ whenever $i \in \pi_*^{-1}(I_t(M))$.  Equivalently, since $\mu_\phi$ is the stationary distribution of $M$ in the original space, and $\mu_{\phi_*} = \Pi_*^T \mu_{\phi}$, we have $\phi_*(\cdot | \hat{s}) = \hat{\phi}(\cdot | \hat{s})$ whenever $\mu_{\phi_*}(\hat{s}) \geq t$.
    
    Note that $\phi_*(\cdot | \hat{s}) = \hat{\phi}(\cdot | \hat{s})$ implies $A_{\phi_*, \hat{\phi}}^R(\hat{s}) = 0$ for any $R$.  Hence,
    
    \begin{align*}
    (1-\gamma) E_{\hat{s} \sim \Pi_*^T p_0}[V_{\phi_*,c}(\hat{s}) - V_{\hat{\phi}, c}(\hat{s})]
    & = \sum_{i \in \pi_*^{-1}(I_t(M))} \mu_{\phi_*}(\hat{s}_i)A_{\phi_*, \hat{\phi}}^c(\hat{s}_i) \quad + \sum_{i \notin \pi_*^{-1}(I_t(M))} \mu_{\phi_*}(\hat{s}_i)A_{\phi_*, \hat{\phi}}^c(\hat{s}_i)\\
    & \leq \sum_{i \notin \pi_*^{-1}(I_t(M))} t |A_{\phi_*, \hat{\phi}}^c(\hat{s}_i)|\\
    & \leq \frac{2t|S|}{1-\gamma}~,
    \end{align*}
    
    following from the simple bound $\max_s |A_{\phi_1, \phi_2}^c(s)| \leq \frac{2}{1-\gamma}$.
    
\end{proof}

The bound in Theorem $\ref{ipmboundthm}$ depends on $\Pi$ in a very discrete sense, controlled by the states where $\Pi$ and $\Pi_*$ agree.  Say $\Pi$ contains a single error, $\hat{s} = \pi^{-1}(s) = \pi_*^{-1}(s')$ for $s \neq s'$.  Then at $\hat{s}$ we mistakenly play the action distribution $\hat{\phi}(\cdot | \hat{s}) = \phi(\cdot | s)$, rather than the correct distribution $\phi_*(\cdot | \hat{s}) = \phi(\cdot | s')$.  Because we never observe actions from the oracle, $\phi$ could be arbitrarily different at $s$ and $s'$, yielding a very suboptimal occupancy measure.

\subsection{Approximate Symmetry Algorithm}

% \begin{algorithm}[h]
% \caption{Permuted Policy Learning}
% \begin{algorithmic}\label{alg:ppl}
% \STATE {\bf Input:} $P$, $\Phi$, $\gamma$, $p_0$, $t$, $(X_1, \dots, X_m)$
% \STATE $M \leftarrow \Phi^T ((1-\gamma)p_0\mathbf{1}^T + \gamma P)^T$
% \STATE $\mu \leftarrow \textsc{Stationary}(M)$
% \FOR{$(i,j) \in [|S|] \times [|S|]$}
% \STATE $N_{ij} \leftarrow 0$
% \ENDFOR
% \FOR{$t \in [m-1]$}
% \STATE $N_{X_t, X_{t+1}} \leftarrow N_{X_t, X_{t+1}} + 1$
% \ENDFOR
% \STATE $\hat{\mu} \leftarrow 0$
% \STATE $\hat{M} \leftarrow 0$
% \FOR{$i \in [|S|]$}
% \STATE $\hat{\mu}_i \leftarrow \sum_j N_{ij}/(m-1)$
% \FOR{$j \in [|\hat{S}|]$}
% \STATE $\hat{M}_{ij} \leftarrow N_{ij} / \sum_k N_{ik}$
% \ENDFOR
% \ENDFOR
% \STATE $D \leftarrow \textsc{Diag}(\mu)$
% \STATE $\hat{D} \leftarrow \textsc{Diag}(\hat{\mu})$
% \STATE $I_t \leftarrow \{i \in [|S|]: \mu_i \geq t\}$
% \STATE $\hat{I}_t \leftarrow \{i \in [|S|]: \hat{\mu}_i \geq t\}$
% \STATE $M \leftarrow \textsc{Submatrix}(M, I_t, I_t)$
% \STATE $\hat{M} \leftarrow \textsc{Submatrix}(M, \hat{I}_t, \hat{I}_t)$
% \STATE $D \leftarrow \textsc{Submatrix}(D, I_t, I_t)$
% \STATE $\hat{D} \leftarrow \textsc{Submatrix}(\hat{D}, \hat{I}_t, \hat{I}_t)$
% \STATE $U, \Sigma, V \leftarrow \textsc{SVD}(D^{1/2}MD^{-1/2})$
% \STATE $\hat{U}, \hat{\Sigma}, \hat{V} \leftarrow \textsc{SVD}(\hat{D}^{1/2}\hat{M}\hat{D}^{-1/2})$
% \STATE $\Pi' \leftarrow \textsc{Hungarian}(V, \hat{V})$
% \STATE Choose any $\Pi \in \mathcal{P}$ such that $\forall i \in \hat{I}_t$, $\pi(i) = \pi'(i)$
% \RETURN $(I \otimes \Pi)^T \Phi \Pi$

% \end{algorithmic}
% \end{algorithm}

\begin{algorithm}[h]
\DontPrintSemicolon %
\KwIn{$P$, $\Phi$, $\gamma$, $p_0$, $t$, $(X_1, \dots, X_m)$}
\KwOut{A policy $\hat{\Phi}: \hat{S} \rightarrow \hat{S} \times A$}

$M \gets \Phi^T ((1-\gamma)p_0\mathbf{1}^T + \gamma P)^T$\;
$\mu \gets \textsc{Stationary}(M)$\;
\For{$(i,j) \in [|S|] \times [|S|]$}{
$N_{ij} \gets 0$\;
}
\For{$t \in [m-1]$}{
$N_{X_t, X_{t+1}} \gets N_{X_t, X_{t+1}} + 1$\;
}
$\hat{\mu} \gets 0$\;
$\hat{M} \gets 0$\;
\For{$i \in [|S|]$}{
 $\hat{\mu}_i \gets \sum_j N_{ij}/(m-1)$\;
\For{$j \in [|S|]$}{
 $\hat{M}_{ij} \gets N_{ij} / \sum_k N_{ik}$\;
}
}
 $D \gets \textsc{Diag}(\mu)$\;
 $\hat{D} \gets \textsc{Diag}(\hat{\mu})$\;
 $I_t \gets \{i \in [|S|]: \mu_i \geq t\}$\;
 $\hat{I}_t \gets \{i \in [|S|]: \hat{\mu}_i \geq t\}$\;
 $M \gets \textsc{Submatrix}(M, I_t, I_t)$\;
 $\hat{M} \gets \textsc{Submatrix}(M, \hat{I}_t, \hat{I}_t)$\;
 $D \gets \textsc{Submatrix}(D, I_t, I_t)$\;
 $\hat{D} \gets \textsc{Submatrix}(\hat{D}, \hat{I}_t, \hat{I}_t)$\;
 $U, \Sigma, V \gets \textsc{SVD}(D^{1/2}MD^{-1/2})$\;
 $\hat{U}, \hat{\Sigma}, \hat{V} \gets \textsc{SVD}(\hat{D}^{1/2}\hat{M}\hat{D}^{-1/2})$\;
 $\Pi' \gets \textsc{Hungarian}(V, \hat{V})$\;
 Choose any $\Pi \in \mathcal{P}$ such that $\forall i \in \hat{I}_t$, $\pi(i) = \pi'(i)$\;
\Return{$(I \otimes \Pi)^T \Phi \Pi$}\;

\caption{Permuted Policy Learning}\label{alg:ppl}
\end{algorithm}

In light of Theorem \ref{ipmboundthm}, an algorithm could either seek to recover $\Pi_*$ exactly, or find a $\Pi$ which agrees with $\Pi_*$ on high occupancy states.  We consider a learning algorithm for both objectives, and bound its sample complexity.  The trick will be carefully setting the threshold $t$ that defines what constitutes high occupancy.

To state the theorem, we introduce the subscript $t$ notation to denote the principle submatrix defined by the indices of $I_t$, and $g := \min_i|\mu_i - t|$ is the gap between the threshold and stationary values.  Lastly, we define:

\begin{definition}[Pseudospectral gap]
    The \emph{pseudospectral gap} of an ergodic Markov chain $M$ is $\gamma_{ps}(M) = \max_{k \geq 1} \frac{1-\lambda_2((D^{-1}M^TD)^kM^k)}{k}$, where $\lambda_2$ denotes the second largest eigenvalue.
\end{definition}

If $M$ is not ergodic, we will take $\gamma_{ps}(M)$ to mean the pseudospectral gap of $M$ restricted to the strongly connected components that intersect $p_0$.

\begin{theorem}\label{thm:main}
    The policy learning algorithm in Algorithm \ref{alg:ppl} satisfies the following: for $1 \geq \delta \geq 0$, $t > 0$, if $D_t^{1/2}M_tD_t^{-1/2}$ is $(\alpha, \beta)$-friendly and $m = poly\left(\frac{1}{\alpha}, \frac{1}{\beta}, \frac{1}{t}, |I_t|, \frac{1}{g}, \frac{1}{\gamma_{ps}(M)}, \log\frac{1}{1-\gamma}, \log |S|, \log \frac{1}{\delta} \right)$ then with probability at least $1-\delta$, the output policy $\hat{\Phi}$ satisfies $g(\Phi, \Pi_*, \hat{\Phi}) \leq \frac{2t|S|}{(1-\gamma)^2}$.  In particular, if $\min_i \mu_i > t$, $\hat{\Phi} = \Phi_*$.
\end{theorem}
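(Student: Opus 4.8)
The plan is to reduce the occupancy-matching objective to a purely combinatorial statement about the recovered permutation, and then show that Algorithm~\ref{alg:ppl} recovers the correct alignment on the high-occupancy states with high probability. By Theorem~\ref{ipmboundthm} it suffices to prove that, with probability at least $1-\delta$, the returned $\Pi$ agrees with $\Pi_*$ on every state of $I_t(M)$ (i.e.\ $\pi^{-1}(s_i) = \pi_*^{-1}(s_i)$ there); the claimed bound $\frac{2t|S|}{(1-\gamma)^2}$ then follows immediately, and the ``in particular'' clause is just the special case $I_t(M) = [|S|]$, where the recovered $\Pi$ equals $\Pi_*$ on all of $S$, so $\hat{\Phi} = \Phi_*$ and $g(\Phi,\Pi_*,\hat{\Phi}) = 0$ by the sanity check in Section~\ref{sec:prelim}.

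First I would establish concentration of the empirical objects. Since $(X_1,\dots,X_m)$ is a trajectory of the ergodic chain $\Pi_*^T M \Pi_*$, I would invoke a Markov-chain concentration inequality governed by the pseudospectral gap $\gamma_{ps}(M)$ to bound the error terms $E = \Pi_* \hat{M}\Pi_*^T - M$ and $\Delta = \Pi_*\hat{D}\Pi_*^T - D$ by a quantity $\varepsilon(m)$ that is polynomially small in $1/m$ and in the listed parameters. The role of $\gamma_{ps}(M)$ is precisely to convert the mixing behaviour of the trajectory into finite-sample concentration of the transition counts and of the occupancy estimate $\hat{\mu}$. Next, using the gap $g = \min_i|\mu_i - t|$, I would show that once $\varepsilon(m) < g$ the empirical threshold set satisfies $\hat{I}_t = \pi_*^{-1}(I_t)$ exactly, so that the two submatrices fed to the SVD step index the same states up to $\Pi_*$.

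On this common index set, $\hat{D}_t^{1/2}\hat{M}_t\hat{D}_t^{-1/2}$ is a small perturbation of $\Pi_*^T\bigl(D_t^{1/2}M_tD_t^{-1/2}\bigr)\Pi_*$; crucially, thresholding at $t$ bounds $\hat{D}_t^{-1/2}$ entrywise by $1/\sqrt{t}$, which prevents the rescaling from amplifying $\varepsilon(m)$ on low-occupancy states (the source of the $1/t$ dependence in the polynomial). Because $D_t^{1/2}M_tD_t^{-1/2}$ is $(\alpha,\beta)$-friendly, I would then apply a Davis--Kahan / Wedin-type bound: the singular-value gap $\alpha$ keeps each one-dimensional singular subspace stable, so $\hat{V}$ is column-wise close to $\Pi_*^T V$ up to signs, while the condition $V^T\mathbf{1} > \beta\mathbf{1}$ (together with the orientation convention and the identity $\mathbf{1}^T\Pi_*^T = \mathbf{1}^T$) pins the correct sign robustly whenever the perturbation is smaller than $\beta$. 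This yields $\|\hat{V} - \Pi_*^T V\|_F \le \varepsilon'(m)$ with $\varepsilon'(m)\to 0$ polynomially.

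The final and hardest step is to upgrade Proposition~\ref{svdperm} from an exact uniqueness statement to a robust rounding guarantee: I would argue that for $\varepsilon'(m)$ small enough the linear assignment problem $\min_\Pi \|\hat{V} - \Pi^T V\|_F$ solved by the Hungarian step returns exactly $\Pi_*$ on $\hat{I}_t$. This requires a quantitative margin --- a lower bound on the separation between distinct rows of $V$ --- ensuring that no small perturbation can make a wrong assignment cheaper. Deriving such a margin purely from the friendliness parameters $(\alpha,\beta)$, and correctly propagating the singular-vector sign ambiguity through the perturbation, is where I expect the main difficulty to lie. Collecting the three constraints $\varepsilon(m) < g$, the Davis--Kahan subspace threshold, and the rounding margin yields the stated polynomial sample complexity; any extension $\Pi$ of the recovered partial permutation then agrees with $\Pi_*$ on $I_t(M)$, completing the reduction to Theorem~\ref{ipmboundthm}.
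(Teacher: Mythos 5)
Your overall architecture matches the paper's: reduce to agreement of the recovered $\Pi$ with $\Pi_*$ on $I_t(M)$ and invoke Theorem~\ref{ipmboundthm}; use pseudospectral-gap concentration for $\hat{M}$ and $\hat{\mu}$; use the gap $g$ to get $\hat{I}_t = \pi_*^{-1}(I_t)$ exactly; control the perturbation of the rescaled transition matrix via the $1/\sqrt{t}$ bound on $\hat{D}_t^{-1/2}$; and combine a singular-value perturbation bound (Wielandt--Hoffman) with Davis--Kahan and the $(\alpha,\beta)$-friendliness to get $\hat{V}$ close to $\Pi_*^T V$ with the signs pinned by the orientation convention. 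This is precisely the content of Lemmas~\ref{lemma:inequalities} and~\ref{lemma:friendlybound}, and your handling of the ``in particular'' clause is also the paper's.

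The one step you explicitly leave open --- the robust rounding of the assignment --- is a genuine gap in the write-up, and the route you sketch for closing it is mislocated: the margin you want does not come from $(\alpha,\beta)$ at all. The paper's Lemma~\ref{lemma:uniqueperm} closes it as follows. First, $\hat{L}_t$ and $\tilde{L}_t := \Pi_{t^*}\hat{L}_t\Pi_{t^*}^T$ are \emph{exact} permutations of one another, so Proposition~\ref{svdperm} applied to this empirical pair (both friendly, by Lemma~\ref{lemma:friendlybound}) yields the exact identity $\hat{V} = \Pi_{t^*}^T\tilde{V}$; hence $\|\Pi_{t^*}^TV-\hat{V}\|_F = \|V-\tilde{V}\|_F \le \sqrt{2|I_t|\zeta}$, so the Hungarian minimizer attains at most this cost. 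Second, for any permutation $\Pi$ attaining cost at most $\sqrt{2|I_t|\zeta}$, the triangle inequality together with the fact that right-multiplication by the orthogonal matrix $\tilde{V}$ preserves Frobenius norm gives $\|\Pi-\Pi_{t^*}\|_F \le 2\sqrt{2|I_t|\zeta}$, while distinct permutation matrices are at Frobenius distance at least $\sqrt{2}$. So the ``separation between distinct rows of $V$'' you were seeking is supplied for free by orthogonality (the rows of an orthogonal matrix are orthonormal, hence pairwise at distance exactly $\sqrt{2}$), not by friendliness; the only quantitative requirement is that $|I_t|\zeta$ be a sufficiently small constant, which is absorbed into the choice of $\epsilon$ and hence of $m$. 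With that observation your plan goes through and coincides with the paper's proof.
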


The most important feature of this bound is the dependence on $|S|$.  In the sample complexity  it only appears through a log term, and all other terms can be independent of $|S|$ depending on the choice of $t$ and the structural properties of $M$.  The error is still linear in $|S|$, but this term appears necessary.  If some occupancy mass leaves the well-supported states $\pi_*^{-1}(I_t)$, it could cover all the negligible states, and either incur error linear in $|S|$, or require exploration of every state and therefore sample complexity linear in $|S|$.

% We also point out the term $\sigma_2(M_t)$ appears in order to bound the mixing time of the chain $M$.  This fact motivates why the analysis is performed on rescaled transition matrices, as running the proof on the transition matrices would still go through but require an explicit assumption on the mixing time, rather than another assumption about singular values.

\begin{proof}[Proof sketch]

Here we give the main ideas of the proof, full details are provided in the Appendix.

Remind that $\hat{M} = \Pi_*^T (M+E) \Pi_*$ and $\hat{D} = \Pi_*^T(D + \Delta)\Pi_*$.  We also define $\tilde{M} = M+E$ as the empirical chain permuted back into the original MDP.   Likewise define $\tilde{D} = D + \Delta$, and $\tilde{\mu}$ to be the diagonal of $\tilde{D}$.

Given $\hat{M}$ and $\hat{D}$, the immediate choice for an estimator of the rescaled transition matrix would be $\hat{D}^{1/2} \hat{M} \hat{D}^{-1/2}$.  However, this will not be well-defined if our samples don't visit every state of $\hat{S}$.
Furthermore, if $M$ is only ergodic when restricted to a subset of $S$, then $\hat{D}^{-1}$ won't be defined even with infinite sample complexity.  Similarly, if $\mu_* := \min_i \mu_i$ is vanishingly small, $m$ will become prohibitively large in order to guarantee that $\hat{D}^{-1}$ is well-defined.

Our primary technical novelty addresses both these issues by setting a threshold $t$ on stationary mass, and discarding states below the threshold.  Define $I_t = \{i \in [|S|] : \mu_i \geq t\}$ and $\hat{I}_t = \{i \in [|S|]: \hat{\mu}_i \geq t\}$.
We restate the notation that a subscript $t$ denotes taking the principle submatrix corresponding to $I_t$ or $\hat{I}_t$ depending on the matrix's domain.  So for example, $M_t$ is $M$ restricted to rows and columns given by $I_t$, and likewise $\hat{M}_t$ is $\hat{M}$ restricted to $\hat{I}_t$.

Several concentration results for empirical Markov chain transitions and stationary distributions control the convergence of our estimators~\citep{wolfer2019minimax,wolfer2019estimating}.  Our main assumption is that the gap $g = \min_i |\mu_i - t|$ is non-negligible.  Then with high probability and sample complexity depending on $g$ but not $\min_i \mu_i$, $\mu_i \geq t$ iff $\tilde{\mu}_i \geq t$.  In other words, no empirical stationary estimates will ''cross" the threshold, or put another way $\pi_*^{-1}(I_t) = \hat{I}_t$.  We can then restrict our attention to the states above the threshold, such that the sample complexity necessary for concentration $\tilde{M} \approx M$ depends on $t$ but not $\min_i \mu_i$ (and only logarithmically on $|S|$).

For $t > 0$, the restricted rescaled transition matrix $L_t = D_t^{1/2} M_t D_t^{-1/2}$ is well-defined.  And with high probability we can define our estimator $\hat{L}_t = \hat{D}_t^{1/2} \hat{M}_t \hat{D}_t^{-1/2}$.
Appealing to a strong friendliness assumption on $L_t$, singular value perturbation inequalities imply that $\hat{L}_t$ is also friendly.

Finally, the asymmetric properties of friendly matrices given in Proposition \ref{svdperm} enable exact recovery of the submatrix of $\Pi_*$ restricted to the indices $I_t$ and $\hat{I}_t$.  And by Theorem \ref{ipmboundthm}, determining the alignment on all high-occupancy states still yields a bound on the imitation loss.

\end{proof}

\section{Online Imitation}\label{section:interactive}

\subsection{MDP Alignment}

In the online setting, we're still seeking to imitate $\Phi$, or equivalently $\rho_\Phi$.  However, we no longer observe trajectories of the correct policy $(I \otimes \Pi_*)^T \Phi \Pi_*$ played in $\hat{\mathcal{M}}$.

Instead, we are in a setting similar to a bandit, but without reward.  At time $t$, we play a policy $\hat{\Phi}_t$ defined on $\hat{\mathcal{M}}$ and observe a transition.  We allow resets to the initial distribution.  After $T$ plays, where $T$ may be a random variable, we choose a final policy $\hat{\Phi}$ and receive instantaneous regret given by $g(\Phi, \Pi_*, \hat{\Phi})$.

One simple algorithm might treat each possible bijection as an arm, where pulling $\Pi$ is akin to running a trajectory using the policy $\Pi^T\Phi\Pi$, and then infer which alignment best matches the behavior policy.  Or one could consider algorithms which don't play policies of the form $\Pi^T\Phi\Pi$ but simply explore the target space in a principled way.

Nevertheless, we derive a lower bound on the imitation loss of any algorithm in the online setting, demonstrating even complete knowledge of the source domain doesn't trivialize third-person imitation.

\subsection{Lower Bound Counterexample}

Consider a small bandit-like MDP (Figure \ref{fig:source}).  Red corresponds to action $r$, blue corresponds to action $b$, and purple corresponds to both.  The numbers on the edges give transition probabilities when taking the associated action.  Let the initial distribution be $p_0(x_0) = p_0(y_0) = 1/2$.
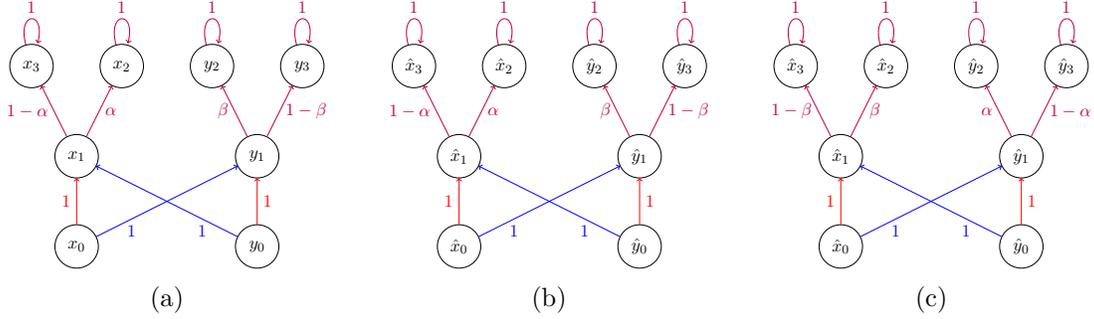
\begin{figure*}
    \centering
    \begin{subfigure}[b]{0.3\textwidth}
        \centering
        \scalebox{0.6}{
            \begin{tikzpicture}
                \node[state]             (x0) at (0,0) {$x_0$};
                \node[state]        (y0) at (4,0) {$y_0$};
                \node[state] (x1) at (0,2) {$x_1$};
                \node[state] (y1) at (4,2) {$y_1$};
                \node[state] (x2) at (1,4) {$x_2$};
                \node[state] (x3) at (-1,4) {$x_3$};
                \node[state] (y2) at (3,4) {$y_2$};
                \node[state] (y3) at (5,4) {$y_3$};
        
                \path[->]
                    (x0) edge[red,anchor=east] node {$1$} (x1)
                    (x0) edge[blue,pos=0.25,below] node {$1$} (y1)
                    (y0) edge[blue,pos=0.25,below] node {$1$} (x1)
                    (y0) edge[red,anchor=west] node {$1$} (y1)
                    (x1) edge[purple,anchor=west] node {$\alpha$} (x2)
                    (y1) edge[purple,anchor=east] node {$\beta$} (y2)
                    (x1) edge[purple,anchor=east] node {$1 - \alpha$} (x3)
                    (y1) edge[purple,anchor=west] node {$1 - \beta$} (y3)
                    (x2) edge[loop above, purple] node {$1$} (x2)
                    (x3) edge[loop above, purple] node {$1$} (x3)
                    (y2) edge[loop above, purple] node {$1$} (y2)
                    (y3) edge[loop above, purple] node {$1$} (y3);
            \end{tikzpicture}
        }
    \caption{}
    \label{fig:source}
    \end{subfigure}
    \begin{subfigure}[b]{0.3\textwidth}
    \centering
    \scalebox{0.6}{
        \begin{tikzpicture}
        \node[state]             (x0) at (0,0) {$\hat{x}_0$};
        \node[state]        (y0) at (4,0) {$\hat{y}_0$};
        \node[state] (x1) at (0,2) {$\hat{x}_1$};
        \node[state] (y1) at (4,2) {$\hat{y}_1$};
        \node[state] (x2) at (1,4) {$\hat{x}_2$};
        \node[state] (x3) at (-1,4) {$\hat{x}_3$};
        \node[state] (y2) at (3,4) {$\hat{y}_2$};
        \node[state] (y3) at (5,4) {$\hat{y}_3$};

        \path[->]
            (x0) edge[red,anchor=east] node {$1$} (x1)
            (x0) edge[blue,pos=0.25,below] node {$1$} (y1)
            (y0) edge[blue,pos=0.25,below] node {$1$} (x1)
            (y0) edge[red,anchor=west] node {$1$} (y1)
            (x1) edge[purple,anchor=west] node {$\alpha$} (x2)
            (y1) edge[purple,anchor=east] node {$\beta$} (y2)
            (x1) edge[purple,anchor=east] node {$1 - \alpha$} (x3)
            (y1) edge[purple,anchor=west] node {$1 - \beta$} (y3)
            (x2) edge[loop above, purple] node {$1$} (x2)
            (x3) edge[loop above, purple] node {$1$} (x3)
            (y2) edge[loop above, purple] node {$1$} (y2)
            (y3) edge[loop above, purple] node {$1$} (y3);
    \end{tikzpicture}
    }
    \caption{}
    \label{fig:target1}
    \end{subfigure}
    \begin{subfigure}[b]{0.3\textwidth}
    \centering
    \scalebox{0.6}{
        \begin{tikzpicture}
        \node[state]             (x0) at (0,0) {$\hat{x}_0$};
        \node[state]        (y0) at (4,0) {$\hat{y}_0$};
        \node[state] (x1) at (0,2) {$\hat{x}_1$};
        \node[state] (y1) at (4,2) {$\hat{y}_1$};
        \node[state] (x2) at (1,4) {$\hat{x}_2$};
        \node[state] (x3) at (-1,4) {$\hat{x}_3$};
        \node[state] (y2) at (3,4) {$\hat{y}_2$};
        \node[state] (y3) at (5,4) {$\hat{y}_3$};

        \path[->]
            (x0) edge[red,anchor=east] node {$1$} (x1)
            (x0) edge[blue,pos=0.25,below] node {$1$} (y1)
            (y0) edge[blue,pos=0.25,below] node {$1$} (x1)
            (y0) edge[red,anchor=west] node {$1$} (y1)
            (x1) edge[purple,anchor=west] node {$\beta$} (x2)
            (y1) edge[purple,anchor=east] node {$\alpha$} (y2)
            (x1) edge[purple,anchor=east] node {$1 - \beta$} (x3)
            (y1) edge[purple,anchor=west] node {$1 - \alpha$} (y3)
            (x2) edge[loop above, purple] node {$1$} (x2)
            (x3) edge[loop above, purple] node {$1$} (x3)
            (y2) edge[loop above, purple] node {$1$} (y2)
            (y3) edge[loop above, purple] node {$1$} (y3);
    \end{tikzpicture}
    }
    \caption{}
    \label{fig:target2}
    \end{subfigure}
    \caption{The bandit-like MDP, where (a) is the source domain, (b) is the target domain given $\Pi_1$ and (c) is the target domain given $\Pi_2$.}
    \label{fig:mdp}
\end{figure*}
In other words, the initial state is either $x_0$ or $y_0$.  Starting at $x_0$, the initial action is deterministic: playing $r$ leads to $x_1$, playing $b$ leads to $y_1$.  Starting at $y$ the actions lead to the opposite states.  Then the choice of action is irrelevant, and the transition to a terminal state is determined by $\alpha$ at $x_1$ and $\beta$ at $y_1$. 

This characterizes $\mathcal{M}$.  To introduce $\hat{\mathcal{M}}$, let's consider two possible bijections $\Pi_1$ and $\Pi_2$, which correspond to the possible target MDPs in Figure \ref{fig:target1} and Figure \ref{fig:target2} (note the values of $\alpha$ and $\beta$ are swapped given $\Pi_2$).
These correspond to two possible dynamics on our target space.  $\Pi_1$ is essentially the identity map, preserving states up to hats.  Whereas $\Pi_2(\hat{x}_i) = y_i$ and $\Pi_2(\hat{y}_i) = x_i$.

Finally, suppose the behavior policy we want to imitate in $\mathcal{M}$ is defined by $\phi(r | x_0) = 1$ and $\phi(b | y_0) = 1$.  In other words, the agent always travels in the first step to $x_1$.
That means, under $\Pi_1$ we want to travel to $\hat{x}_1$, and under $\Pi_2$ we want to travel to $\hat{y}_1$.
Intuitively, because $\rho_\Phi$ is highly asymmetric, but the MDP is nearly symmetric, one cannot choose a policy that performs well in multiple permutations of the MDP.  We formalize this intuition below.

\begin{theorem}\label{thm:negative}
Choose any positive values $\epsilon < \epsilon_0$ and $\delta < \delta_0$, where $\epsilon_0$ and $\delta_0$ are universal constants, and let $\alpha = 1/2 + \epsilon$ and $\beta = 1/2 - \epsilon$.  Consider any algorithm $\mathcal{A}$ that achieves $\gamma/4$-optimal imitation loss on the above MDP with probability at least $1-\delta$.  Then $E[T|\Pi_* = \Pi_i] = \Omega\left(\frac{1}{\epsilon^2}\log \frac{1}{\delta}\right)$ for some $i \in \{1,2\}$.
\end{theorem}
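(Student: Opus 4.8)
The plan is to reduce the imitation task to a two–point hypothesis test between the environments induced by $\Pi_1$ and $\Pi_2$, and then invoke a sequential change–of–measure (bandit) lower bound. The structural fact that makes this work is that the two target MDPs are \emph{identical} except for the branching kernels at the level–one states $\hat{x}_1$ and $\hat{y}_1$: under $\Pi_1$ the branch parameters are $(\alpha,\beta)$ and under $\Pi_2$ they are $(\beta,\alpha)$, while the initial distribution, the deterministic routing out of $\hat{x}_0,\hat{y}_0$, and the absorbing behaviour at level two coincide. Hence the only observations that can separate $\Pi_1$ from $\Pi_2$ are the transitions sampled upon visiting $\hat{x}_1$ or $\hat{y}_1$, each a Bernoulli draw with success probability $1/2+\epsilon$ or $1/2-\epsilon$.

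First I would establish an imitation–loss separation. As far as the occupancy measure is concerned, any output policy $\hat{\Phi}$ is summarized by the scalar $q(\hat{\Phi}) := \tfrac12\hat{\phi}(r\mid\hat{x}_0) + \tfrac12\hat{\phi}(b\mid\hat{y}_0)$, the occupancy mass routed into the $\hat{x}_1$ branch. A direct computation of the reset–chain occupancy (using that the level–zero marginal mass is policy–independent and the level–two states are absorbing) gives that the \emph{state–marginal} occupancy total variation equals $(1-q)\gamma$ when $\Pi_* = \Pi_1$ and $q\gamma$ when $\Pi_* = \Pi_2$; since marginalizing actions can only decrease total variation, these lower bound the imitation loss $g(\Phi,\Pi_*,\hat{\Phi})$ of \eqref{objective}. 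As the correct policy attains loss $0$ (the sanity check), a $\gamma/4$–optimal output forces $q\ge 3/4$ under $\Pi_1$ and $q\le 1/4$ under $\Pi_2$. These are mutually exclusive, so the rule ``declare $\Pi_1$ iff $q(\hat{\Phi})\ge 1/2$'' converts any algorithm that succeeds with probability $1-\delta$ into a test that identifies $\Pi_*$ with error at most $\delta$ under each hypothesis. (The $\gamma$ in the threshold is natural, as the discrepancy lives one discounted step past the initial distribution.)

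Next I would lower bound the cost of that test. Writing $\nu_1,\nu_2$ for the laws of the observed trajectory under the two environments and $\mathcal{E} = \{q(\hat{\Phi})\ge 1/2\}$, we have $\Pr_{\nu_1}(\mathcal{E})\ge 1-\delta$ and $\Pr_{\nu_2}(\mathcal{E})\le \delta$. The Bretagnolle--Huber inequality gives $\mathrm{KL}(\nu_1\|\nu_2)\ge \mathrm{kl}(1-\delta,\delta) = \Omega(\log\tfrac1\delta)$, while the divergence (chain–rule/Wald) decomposition for sequentially sampled Markov models, together with the fact that the kernels agree off $\{\hat{x}_1,\hat{y}_1\}$, yields
\[
\mathrm{KL}(\nu_1\|\nu_2) = E_{\nu_1}[N_{\hat{x}_1}]\,\mathrm{KL}(\mathrm{Bern}(\alpha)\|\mathrm{Bern}(\beta)) + E_{\nu_1}[N_{\hat{y}_1}]\,\mathrm{KL}(\mathrm{Bern}(\beta)\|\mathrm{Bern}(\alpha)),
\]
where $N_{\hat{x}_1},N_{\hat{y}_1}$ count visits. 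For $\epsilon<\epsilon_0$ each Bernoulli divergence is at most $C\epsilon^2$, so $C\epsilon^2\,E_{\nu_1}[N_{\hat{x}_1}+N_{\hat{y}_1}] \ge \Omega(\log\tfrac1\delta)$. Since the number of level–one visits never exceeds the number of transitions, $N_{\hat{x}_1}+N_{\hat{y}_1}\le T$, and therefore $E[T\mid \Pi_*=\Pi_1] = \Omega\!\left(\tfrac{1}{\epsilon^2}\log\tfrac1\delta\right)$, proving the claim for $i=1$ (the case $i=2$ being symmetric).

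The main obstacle is the sequential change–of–measure step: because $T$ is a possibly data–dependent stopping time and the count of informative samples $N_{\hat{x}_1}+N_{\hat{y}_1}$ is itself random, the divergence decomposition must be applied in its Wald/chain–rule form on the trajectory filtration rather than for a fixed sample size, and one must verify that no information beyond the level–one branch outcomes can separate the two models. The loss–separation computation is routine by comparison; its only delicate point is the passage from state–action total variation to state–marginal total variation via the data–processing inequality, which is what makes the lower bound on $g$ clean.
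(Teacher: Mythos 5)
Your proposal is correct and follows essentially the same route as the paper: the same loss separation (the paper's witness-function calculation gives exactly your bound $g \geq \gamma(1-q)$ with $q = \tfrac{1}{2}\hat{\phi}(r\mid\hat{x}_0)+\tfrac{1}{2}\hat{\phi}(b\mid\hat{y}_0)$), followed by the same reduction of a $\gamma/4$-optimal imitator to a tester that distinguishes the $(\alpha,\beta)$ versus $(\beta,\alpha)$ Bernoulli branches at $\hat{x}_1,\hat{y}_1$. The only difference is the final step: the paper invokes Theorem 13 of Mannor and Tsitsiklis (2004) as a black box for the two-armed bandit sample-complexity lower bound, whereas you re-derive that bound directly via Bretagnolle--Huber and the Wald-type divergence decomposition over the trajectory filtration, which is essentially the content of the cited theorem.
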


\begin{proof}

Fix a policy $\hat{\phi}$, and we will write $\rho_{\hat{\phi}, \Pi}$ as simply $\rho_{\Pi}$.

Again use the variational form of total variation to say $TV(\rho_1, \rho_2) = \sup_{\|c\|_\infty \leq 1} \langle \rho_1 - \rho_2, c \rangle$.  Choose $c$ so that $c(\hat{x}_i, a) = 1$ for $i \in \{1,2,3\}$ and $a \in A$, and 0 elsewhere.  Then a direct calculation gives $g(\Phi, \Pi_1, \hat{\Phi}) = TV((I \otimes \Pi_1^T)\rho_\Phi, \rho_{\Pi_1}) \geq \gamma - \gamma (\hat{\phi}(r|\hat{x}_0) + \hat{\phi}(b|\hat{y}_0))/2$.

Now we proceed by a reduction to multi-armed bandits with known biases.
Consider a two-armed bandit with Bernoulli rewards, where the hypotheses for arm biases are $H_1 = \{\alpha, \beta\}$ and $H_2 = \{\beta, \alpha\}$.
We define the following algorithm $\mathcal{B}$ for the two-armed bandit.  First run algorithm $\mathcal{A}$ on our MDP, where we couple pulls from arm 1 with transitions from $\hat{x}_1$ and pulls from arm 2 with transitions from $\hat{y}_1$.  Call $\hat{\phi}$ the policy output by $\mathcal{A}$.  Then output arm 1 if $\hat{\phi}(r|\hat{x}_0) > 1/2$, otherwise arm 2.

Under hypothesis $H_1$, $\Pi_* = \Pi_1$, so by our assumptions on $\mathcal{A}$, with probability at least $1-\delta$ we have $\gamma/4 > \gamma - \gamma (\hat{\phi}(r|\hat{x}_0) + \hat{\phi}(b|\hat{y}_0))/2$, which implies $\hat{\phi}(r|\hat{x}_0) > 1/2$.  Similar reasoning implies $\hat{\phi}(r|\hat{x}_0) \leq 1/2$ under $H_2$, hence $\mathcal{B}$ outputs the optimal arm with probability at least $1-\delta$.  Because $(\alpha, \beta) = (1/2 + \epsilon, 1/2-\epsilon)$, and the sample complexity of $\mathcal{A}$ is lower bounded by that of $\mathcal{B}$, the result then follows from Theorem 13 in~\citet{mannor2004sample}.

\end{proof}

This bound illustrates why imitation is substantially more challenging than seeking high reward.  In a regular RL problem with reward at the terminal states, if $\alpha \approx \beta$ then the expected reward changes very slightly depending on the policy.  But in the imitation setting, the value of $\alpha$ and $\beta$ are essentially features of the states, which the agent must (very inefficiently) distinguish in order to achieve error lower than $\gamma / 4$.  Likewise, this counterexample captures why the online setting is the more challenging one studied in this work.  In the offline regime, an oracle would only visit states on one half of the MDP and easily break the symmetry.

One may attribute this pessimistic bound to the choice of total variation distance.  Indeed, among IPMs, total variation has very poor generalization properties~\citep{sun2019provably}.  However, an alternative choice of IPM corresponds to a non-uniform prior over reward functions that the behavior policy is truly optimizing. If the prior strongly favored reward functions that smoothly depend on the local dynamics, then $c(\hat{x}_i, a) \approx c(\hat{y}_i, a)$ and this counterexample would no longer hold.  But this is a somewhat unnatural assumption, precluding for example a 2D gridworld with positive reward only at one state (since the gridworld would have many symmetries).

\section{Conclusion and Future Work}

In this paper, we introduced a theoretical analysis of third-person imitation, as an initial step in more fully understanding generalization in RL.  We demonstrated upper bounds for imitation learning across isomorphic domains under offline and state-only assumptions, and a lower bound for the online setting.  These bounds depend heavily on the structural properties of the dynamics and behavior policy, as well as the setting of third-person imitation where the domain adaptation is across isomorphic environments.

The upper bound dependence on structural and spectral properties is likely not optimal, although the dependence on $|S|$ in the error likely cannot be improved.  The lower bound is somewhat more robust, and any MDP with symmetry such that this bandit-like MDP can be embedded will suffer a similar lower bound on sample complexity.

The isomorphism assumption is certainly too strict in general.  However, weakening the assumption requires a characterization of MDP similarity, in order to decide when one should expect policy transfer through imitation to be feasible.  MDPs with features~\citep{krishnamurthy2016pac} could better characterize similarity, where the spectral features studied in this work could be combined with observed state features for more effective alignment through linear assignment.  Future work may include studying third-person imitation in the online setting for upper bounds, or exploiting MDP asymmetry in deep imitation.

\subsubsection*{Acknowledgements}

We are extremely grateful to David Brandfonbrener, Min Jae Song, and Raghav Singhal, who gave feedback and insightful suggestions throughout the work.

This work partially supported by the Alfred P. Sloan Foundation, NSF RI-1816753, NSF CAREER CIF 1845360, NSF CHS-1901091, Samsung Electronics, and the Institute for Advanced Study.

% [NOTE: Formatting, use unnumbered third level heading for reference title, should be same size as acknowledgements]

% \bibliography{refs}

\appendix
\clearpage
\onecolumn
\section{Proof of Theorem \ref{thm:main}}

We define $D_{p_0} = \sum_{i} (p_0)_i^2/\mu_i$, where we interpret $0/0 = 0$.  Note that $p_0$ is absolutely continuous with respect to $\mu$, so this term is well-defined.

We now state the necessary concentration results.  Note the theorems are slightly altered from their statements in the literature, but follow immediately from their original proofs.  The first has a better dependence on $|S|$ by considering $L_2$ norm rather than $L_1$, and with slightly loose sample complexity.  The second is exactly an intermediate statement made in the theorem's original proof.

\begin{theorem}[Theorem 1 in~\citet{wolfer2019minimax}]\label{thm:chainconcentrate}
If $m = O\!\left( \!\frac{1}{\gamma_{ps}\epsilon^2\mu_i} \log\!\left(\frac{|S|\sqrt{D_{p_0}}}{\delta} \right) \! \right)$ then with probability at least $1-\delta/2$,
$\|M(i,\cdot) - \tilde{M}(i,\cdot)\|_2 \leq \epsilon$.
\end{theorem}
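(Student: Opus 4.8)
The plan is to reduce the row-wise concentration to a classical i.i.d. empirical-distribution bound, isolating the genuinely Markovian part in a single visit-count estimate. Write $N_i$ for the number of times the trajectory visits state $i$ among its first $m-1$ steps, so that $\tilde{M}(i,\cdot)$ is the empirical distribution of the successors observed out of $i$. The key structural fact is the strong Markov property: conditioned on the random set of times at which the chain sits at $i$, the successor states drawn at those times are i.i.d. samples from $M(i,\cdot)$, independent of the past. Hence, conditioned on $\{N_i = n\}$, the vector $\tilde{M}(i,\cdot)$ is distributed exactly as the empirical distribution of $n$ i.i.d. draws from $M(i,\cdot)$.

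First I would control $N_i$. By ergodicity its expectation is $\approx (m-1)\mu_i$, and concentration of the visit count around this mean is precisely the step that invokes the pseudospectral gap. I would import this directly from the Markov-chain concentration machinery underlying the cited result (Paulin-style spectral methods, as used by Wolfer--Kontorovich), which yields $N_i \geq c\, m\mu_i$ with probability at least $1-\delta/4$ once $m = \Omega\!\big(\tfrac{1}{\gamma_{ps}\mu_i}\log\tfrac{|S|\sqrt{D_{p_0}}}{\delta}\big)$. This is where both $\gamma_{ps}$ and the warm-start term $D_{p_0}$ enter, the latter measuring how far the initial distribution $p_0$ sits from $\mu$, and it is also the source of the $\log|S|$ in the tail bound.

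Second, conditioned on $N_i = n$, I would bound the $L_2$ deviation of an empirical distribution from its mean. The crucial observation --- and the only place the statement departs from the $L_1$ version in the literature --- is that the $L_2$ rate is dimension-free: for $\hat p$ the empirical distribution of $n$ i.i.d. draws from $p$, one has $E\|\hat p - p\|_2^2 = \tfrac1n\sum_j p_j(1-p_j) \leq \tfrac1n$, with no factor of $|S|$. To upgrade this to a high-probability statement I would apply bounded differences (McDiarmid): changing a single sample perturbs $\|\hat p - p\|_2$ by at most $O(1/n)$, so $\|\hat p - p\|_2$ concentrates around its mean $\leq 1/\sqrt n$ at scale $\sqrt{\tfrac1n\log\tfrac1\delta}$. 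Taking $n = \Omega(\tfrac{1}{\epsilon^2}\log\tfrac1\delta)$ then gives $\|M(i,\cdot)-\tilde M(i,\cdot)\|_2 \leq \epsilon$ with probability at least $1-\delta/4$.

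Finally I would combine the two events by a union bound. The visit-count step guarantees $N_i = \Omega(m\mu_i)$, and the i.i.d. step demands $N_i = \Omega(\epsilon^{-2}\log\tfrac1\delta)$; substituting the former into the latter forces $m\mu_i = \Omega(\epsilon^{-2}\log\tfrac1\delta)$, and bounding the maximum of this requirement with the visit-concentration requirement $m = \Omega(\tfrac{1}{\gamma_{ps}\mu_i}\log(\cdot))$ by their product yields the stated (slightly loose) bound $m = O\!\big(\tfrac{1}{\gamma_{ps}\epsilon^2\mu_i}\log\tfrac{|S|\sqrt{D_{p_0}}}{\delta}\big)$, with total failure probability $\delta/2$. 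The main obstacle is the visit-count concentration of the first step, since that is the only place the i.i.d. reduction genuinely fails and the chain's mixing --- quantified through $\gamma_{ps}$ --- must be used; the cleanest route is to reuse the cited proof verbatim there and alter only the terminal per-row distribution estimate from $L_1$ to the dimension-free $L_2$ bound, which is exactly what removes the polynomial $|S|$ dependence and leaves it only inside the logarithm.
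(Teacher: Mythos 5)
Your proposal is correct and matches the paper's treatment: the paper gives no standalone proof of this statement, instead citing \citet{wolfer2019minimax} and remarking that the $L_2$ version ``follows immediately from the original proof'' with a slightly loose sample complexity, which is exactly your plan of reusing the visit-count concentration (where $\gamma_{ps}$ and $D_{p_0}$ enter) verbatim and swapping only the terminal per-row empirical-distribution estimate from the $\sqrt{|S|/n}$-rate $L_1$ bound to the dimension-free $L_2$ bound. Your decomposition, the source of each parameter in the bound, and the observation that this swap is what confines the $|S|$ dependence to the logarithm all agree with the paper's stated justification.
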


\begin{theorem}[Theorem 5.1 in \citet{wolfer2019estimating}]\label{thm:stationaryconcentrate}
If $m = O\!\left( \!\frac{1}{\gamma_{ps}\epsilon^2\mu_i} \log \! \left(\frac{|S|\sqrt{D_{p_0}}}{\delta} \right) \! \right)$ then with probability at least $1 - \delta/2$, $|\mu_i-\tilde{\mu}_i| \leq \epsilon \mu_i$.
\end{theorem}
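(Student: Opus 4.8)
The plan is to recognize $\tilde{\mu}_i$ as a normalized additive functional of the sampled trajectory and control its deviation from $\mu_i$ via a Bernstein-type concentration inequality for Markov chains governed by the pseudospectral gap. Writing $f = \mathbf{1}[\cdot = i]$, the empirical estimate is the occupation frequency $\tilde{\mu}_i = \frac{1}{m-1}\sum_{t=1}^{m-1} f(X_t)$, whose stationary mean is $E_\mu[f] = \mu_i$. The goal thus reduces to bounding the deviation of an empirical average of a bounded function of the chain from its stationary expectation, with the relative-error target $|\tilde{\mu}_i - \mu_i| \le \epsilon\mu_i$.

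The main step is to invoke a pseudospectral Bernstein inequality of the type in \citet{paulin2015concentration}, which bounds $P(|\tilde{\mu}_i - \mu_i| > \eta)$ by a prefactor times $\exp\!\left(-\frac{c\,\gamma_{ps}(m-1)\eta^2}{\mathrm{Var}_\mu(f) + \eta}\right)$. The crucial observation is that the relevant variance proxy is the \emph{stationary} variance $\mathrm{Var}_\mu(f) = \mu_i(1-\mu_i) \le \mu_i$, not a worst-case constant; the pseudospectral gap already multiplies the sample count and absorbs the temporal correlations, so no additional mixing-time factor is incurred. Using the pseudospectral rather than the spectral gap is precisely what permits the argument to apply to the non-reversible chain $M$ considered here.

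Setting the deviation level to $\eta = \epsilon\mu_i$ yields the relative guarantee. Substituting $\mathrm{Var}_\mu(f) \le \mu_i$ and $\eta = \epsilon\mu_i \le \mu_i$, the exponent simplifies to order $\frac{\gamma_{ps}(m-1)\epsilon^2\mu_i^2}{\mu_i(1+\epsilon)} \asymp \gamma_{ps}\, m\, \epsilon^2 \mu_i$, which is exactly the combination appearing in the claimed sample complexity. Solving for the smallest $m$ that drives this exponential below the target failure probability $\delta/2$ gives $m = O\!\left(\frac{1}{\gamma_{ps}\epsilon^2\mu_i}\log\frac{|S|\sqrt{D_{p_0}}}{\delta}\right)$, with the residual prefactors landing inside the logarithm.

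The main obstacle is accounting for the non-stationary start $X_1 \sim p_0 \ne \mu$, which is where $D_{p_0}$ enters. Because the chain is not initialized from its stationary law, the Markov Bernstein bound picks up a warm-start correction equal to the $\mu$-weighted $L^2$ norm of the density $dp_0/d\mu$, namely $\sqrt{\sum_i (p_0)_i^2/\mu_i} = \sqrt{D_{p_0}}$, the square root of one plus the $\chi^2$-divergence of $p_0$ from $\mu$. The technical care lies in routing this prefactor into the logarithm rather than letting it multiply the leading $1/(\gamma_{ps}\epsilon^2\mu_i)$ rate, and in confirming that the stationary variance bound survives the reweighting. The $|S|$ inside the log is inherited from the original formulation in \citet{wolfer2019estimating}, where the same estimate is established uniformly over all states through a union bound; retaining it here is loose for a single fixed $i$ but harmless, and it matches the form in which the bound is later applied.
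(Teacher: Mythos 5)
Your proposal is correct and coincides with how this statement is actually established: the paper offers no proof of its own, importing the bound verbatim as ``exactly an intermediate statement'' from the proof of Theorem 5.1 in \citet{wolfer2019estimating}, and that original argument is precisely your route---a Bernstein inequality for non-reversible chains governed by the pseudospectral gap \citep{paulin2015concentration}, applied to the occupation frequency with deviation level $\eta = \epsilon\mu_i$ and stationary variance $\mathrm{Var}_\mu(f) = \mu_i(1-\mu_i) \le \mu_i$, with the non-stationary start handled by the Cauchy--Schwarz change-of-measure $P_{p_0}(A) \le \sqrt{D_{p_0}\,P_\mu(A)}$, which is exactly what routes $\sqrt{D_{p_0}}$ into the logarithm rather than the leading rate. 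Your remaining observations---that $D_{p_0} = 1 + \chi^2(p_0\,\|\,\mu)$, and that the $|S|$ inside the log is a union-bound artifact of the original uniform-over-states formulation, loose for a single fixed $i$ but harmless---also match the source and the paper's remark that the statement is only slightly altered from the literature.
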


We observe a simple consequence of the definition of the rescaled transition matrix:

\begin{proposition}\label{prop:fill}
    For an ergodic Markov chain $M$ with rescaled transition matrix $L$, $\sigma_1(L) = 1$ and $\gamma_{ps}(M) \geq 1 - \sigma_2(L)^2$.
\end{proposition}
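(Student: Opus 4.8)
The plan is to treat the two claims separately, since they rest on slightly different facts. For $\sigma_1(L)=1$ I would bound the operator norm of $L$ from both sides. For the lower bound, set $v:=D^{1/2}\mathbf{1}$. Using row-stochasticity $M\mathbf{1}=\mathbf{1}$ one computes $Lv = D^{1/2}MD^{-1/2}D^{1/2}\mathbf{1} = D^{1/2}M\mathbf{1} = v$, and using stationarity $M^T\mu=\mu$ one computes $L^Tv = D^{-1/2}M^TD^{1/2}D^{1/2}\mathbf{1} = D^{-1/2}M^T\mu = D^{-1/2}\mu = v$. Hence $L^TLv=v$, so $1$ is an eigenvalue of $L^TL$ and $\sigma_1(L)\geq 1$.

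For the matching upper bound I would show $\|Lx\|_2\leq\|x\|_2$ for all $x$. Writing $y:=D^{-1/2}x$ (well-defined since $D$ is positive definite on the ergodic support), we have $\|Lx\|_2^2 = \|D^{1/2}My\|_2^2 = \sum_i \mu_i\bigl(\sum_j M_{ij}y_j\bigr)^2$. Because each row of $M$ is a probability distribution, Jensen's inequality gives $\bigl(\sum_j M_{ij}y_j\bigr)^2 \leq \sum_j M_{ij}y_j^2$; summing over $i$ and interchanging the order of summation yields $\sum_j y_j^2 (M^T\mu)_j = \sum_j \mu_j y_j^2 = \|x\|_2^2$, again invoking $M^T\mu=\mu$. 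Thus $\sigma_1(L)\leq 1$, and combined with the lower bound $\sigma_1(L)=1$.

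For the pseudospectral inequality, the key observation is that $L^TL = D^{-1/2}M^TDMD^{-1/2}$ is similar to $D^{-1}M^TDM$ via the map $A\mapsto D^{-1/2}AD^{1/2}$, since $D^{-1/2}(L^TL)D^{1/2} = D^{-1}M^TDM$. The two matrices therefore share the same spectrum, and because $L^TL$ is symmetric positive semidefinite with eigenvalues $\sigma_i(L)^2$ ordered decreasingly, the second largest eigenvalue of $D^{-1}M^TDM$ equals $\sigma_2(L)^2$. The claim then follows by retaining only the $k=1$ term in the maximum defining $\gamma_{ps}(M)$: for $k=1$ the matrix $(D^{-1}M^TD)^kM^k$ is exactly $D^{-1}M^TDM$, so $\gamma_{ps}(M)\geq 1-\lambda_2(D^{-1}M^TDM) = 1-\sigma_2(L)^2$.

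The only genuinely nontrivial step is the operator-norm upper bound $\sigma_1(L)\leq 1$; everything else is a direct eigenvector verification or a similarity-and-bookkeeping argument. The Jensen-based estimate is where the row-stochasticity of $M$ and the invariance $M^T\mu=\mu$ must be combined, so I would set up the change of variables $y=D^{-1/2}x$ carefully to make both appeals to stationarity line up cleanly.
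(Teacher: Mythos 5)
Your proposal is correct, and for the pseudospectral inequality it is the paper's argument exactly: both hinge on the similarity $D^{-1}M^TDM = D^{-1/2}(L^TL)D^{1/2}$ and on keeping only the $k=1$ term in the maximum defining $\gamma_{ps}$. Where you diverge is in how $\sigma_1(L)=1$ is established. The paper gets both conclusions in one stroke by citing that $D^{-1}M^TDM$ is itself a row-stochastic matrix (the multiplicative reversibilization), hence has spectral radius $1$, and then reading off the eigenvalues of $L^TL$ through the similarity. You instead verify directly that $v=D^{1/2}\mathbf{1}$ is a common fixed vector of $L$ and $L^T$ (giving $\sigma_1(L)\geq 1$) and prove the operator-norm bound $\|Lx\|_2\leq\|x\|_2$ via Jensen on the stochastic rows of $M$ together with $M^T\mu=\mu$ (giving $\sigma_1(L)\leq 1$). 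Your route is more self-contained and makes explicit exactly where row-stochasticity and stationarity enter; the paper's route is shorter but leans on an external fact about the reversibilization. Both are sound, and your Jensen computation is essentially an unpacked proof of the very fact the paper cites.
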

\begin{proof}
    Choosing $k=1$ in the definition of $\gamma_{ps}$ gives the product $D^{-1}M^TDM = D^{-1/2}L^TLD^{1/2}$.  The first term itself is a Markov chain called the multiplicative reversiblization~\citep{paulin2015concentration}.  Because the chain has maximum eigenvalue $1$ and the eigenvalues of $L^TL$ are the squares of the singular values, it follows $\sigma_1(L) = 1$ and $\gamma_{ps} \geq 1 - \sigma_2(L)^2$.
\end{proof}

We set the occupancy threshold via $t$, and consider properties of the empirical estimators:

\begin{lemma}\label{lemma:inequalities}
    Let $g := \min_i |\mu_i - t|$.  Assume $g > 0$, $t > 0$, and $1/4 > \epsilon > 0$. If $m = O\left( \max\left(\frac{1}{\epsilon^2t}, \frac{1}{g^2} \right)\frac{1}{\gamma_{ps}} \log\left(\frac{|S|\sqrt{D_{p_0}}}{\delta} \right) \right)$, then with probability at least $1-\delta$, we have the following:
    \begin{enumerate}[(a)]
        \item $\pi_*^{-1}(I_t) = \hat{I}_t$
        \item $\hat{D}_t^{-1}$ is well-defined
        \item $\|E_t\|_F \leq \epsilon \sqrt{|I_t|}$
        \item $\|f_+(\Delta_t)^{1/2}\|_F \leq \sqrt{\epsilon}$ where $f_+(\cdot)$ is the elementwise absolute value.
        \item $\|(D_t + \Delta_t)^{-1/2} - D_t^{-1/2}\|_F \leq 2\sqrt{\frac{\epsilon |I_t|}{t}}$
    \end{enumerate}
\end{lemma}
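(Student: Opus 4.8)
The plan is to obtain every claim from the two per-state concentration results (Theorems \ref{thm:chainconcentrate} and \ref{thm:stationaryconcentrate}), invoked with accuracy parameters tuned differently for the above-threshold claims (c)--(e) and for the threshold-recovery claim (a), and then to combine them by a union bound; these two regimes are exactly what produce the $\frac{1}{\epsilon^2 t}$ and $\frac{1}{g^2}$ terms in $m$. Claim (b) requires no probability: by construction $\hat I_t=\{i:\hat\mu_i\ge t\}$, so every diagonal entry of $\hat D_t$ is at least $t>0$ and $\hat D_t^{-1}$ exists. For (c), I would write $E_t$ as the $I_t\times I_t$ submatrix of $\tilde M-M$, so that $\|E_t\|_F^2\le\sum_{i\in I_t}\|\tilde M(i,\cdot)-M(i,\cdot)\|_2^2$; since each such row has $\mu_i\ge t$, Theorem \ref{thm:chainconcentrate} attains row accuracy $\epsilon$ at cost $O(\frac{1}{\gamma_{ps}\epsilon^2 t}\log(\cdot))$, and summing over the $|I_t|$ rows gives $\|E_t\|_F\le\epsilon\sqrt{|I_t|}$.

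For (d) and (e) I would use Theorem \ref{thm:stationaryconcentrate} in relative form on the above-threshold states, namely $|\tilde\mu_i-\mu_i|\le\epsilon\mu_i$ for $i\in I_t$, under the same $\frac{1}{\epsilon^2 t}$ budget. Then (d) is immediate, since $\|f_+(\Delta_t)^{1/2}\|_F^2=\sum_{i\in I_t}|\tilde\mu_i-\mu_i|\le\epsilon\sum_{i\in I_t}\mu_i\le\epsilon$ using $\sum_i\mu_i=1$. Claim (e) reduces, as both matrices are diagonal, to bounding $\sum_{i\in I_t}(\tilde\mu_i^{-1/2}-\mu_i^{-1/2})^2$; writing $\tilde\mu_i=\mu_i(1+\xi_i)$ with $|\xi_i|\le\epsilon<1/4$, an elementary inequality gives $|\tilde\mu_i^{-1/2}-\mu_i^{-1/2}|\le\epsilon\,\mu_i^{-1/2}\le\epsilon/\sqrt t$, so the sum is at most $\epsilon^2|I_t|/t$ and its square root is at most $2\sqrt{\epsilon|I_t|/t}$, with room to spare since $\epsilon\le\sqrt\epsilon$.

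The hard part is (a), threshold recovery, which is the technical heart of the lemma because it involves every state, including rare ones whose $\mu_i$ may lie far below $t$, while the budget $\frac{1}{g^2}$ must not degrade as $\mu_i\to0$. I would first translate $\pi_*^{-1}(I_t)=\hat I_t$ into the original space, where (since $\tilde\mu$ is the empirical stationary mass pulled back through $\Pi_*$) it reads $\{i:\mu_i\ge t\}=\{i:\tilde\mu_i\ge t\}$, i.e. no state crosses the threshold, and then reduce this to the uniform absolute guarantee $|\tilde\mu_i-\mu_i|<g$ for all $i$: if $\mu_i\ge t$ then $\mu_i\ge t+g$ forces $\tilde\mu_i>t$, and if $\mu_i<t$ then $\mu_i\le t-g$ forces $\tilde\mu_i<t$. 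The key trick is to extract this absolute bound from the relative statement of Theorem \ref{thm:stationaryconcentrate}: requesting relative accuracy $\epsilon=g/\mu_i$ yields absolute accuracy $g$ at per-state cost $O(\frac{\mu_i}{\gamma_{ps}g^2}\log(\cdot))$, which since $\mu_i\le1$ is $O(\frac{1}{\gamma_{ps}g^2}\log(\cdot))$; the factor $\mu_i$ in the theorem's denominator cancels, so the cost is uniform over all states and does not blow up as $\mu_i\to0$. The one point requiring care is that for the rarest states this relative accuracy may exceed $1$, but there it is merely a weak, hence easily valid, relative bound, and the resulting absolute error $g$ is precisely what blocks a crossing.

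Finally I would union-bound over the $O(|S|)$ per-state invocations, allocating failure probability $\delta/O(|S|)$ to each, which only inflates the logarithm to $\log(|S|\sqrt{D_{p_0}}/\delta)$ up to constants, so that (a)--(e) hold simultaneously with probability $1-\delta$; taking the maximum of the two sample-complexity regimes gives the stated $m$.
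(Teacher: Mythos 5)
Your proposal is correct and takes essentially the same approach as the paper: the same two-regime invocation of the Wolfer--Kontorovich per-state concentration bounds (precision $\epsilon$ on the states with $\mu_i \ge t$ for (c)--(e), and relative precision of order $g/\mu_i$ on all states so the $\mu_i$ factor cancels and no state crosses the threshold, for (a)), combined via a union bound. Two minor remarks: your observation that (b) holds deterministically from the definition of $\hat{I}_t$ is a slight simplification over the paper's route through concentration, and for (a) you should request absolute accuracy strictly below $g$ (the paper uses $g/2$) so that the boundary case $\tilde{\mu}_i = t$ with $\mu_i < t$ cannot place a discarded state into $\hat{I}_t$.
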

\begin{proof}
    Note that for all $i \in I_t$, $\mu_i \geq t$.  So choosing precision $\epsilon$ and confidence $\frac{\delta}{2|I_t|}$ in Theorem \ref{thm:chainconcentrate} and Theorem \ref{thm:stationaryconcentrate},  taking a union bound over all $i \in I_t$, and noting $|I_t| \leq |S|$, we have that when $m = O\left( \frac{1}{\gamma_{ps}\epsilon^2 t} \log\left(\frac{|S|\sqrt{D_{p_0}}}{\delta} \right) \right)$, with probability at least $1-\delta/2$, $\|M(i,\cdot) - \tilde{M}(i,\cdot)\|_2 \leq \epsilon$ and $|\mu_i-\tilde{\mu}_i| \leq \epsilon \mu_i$.
    
    Additionally, choosing precision $\frac{g}{2\mu_i}$ and confidence $\frac{\delta}{2|S|}$ in Theorem \ref{thm:stationaryconcentrate}, and taking a union bound over all $i \in [|S|]$, when $m = O\left( \frac{1}{\gamma_{ps}g^2} \log\left(\frac{|S|\sqrt{D_{p_0}}}{\delta} \right)\right)$, with probability at least $1-\delta/2$ we have $|\mu_i-\tilde{\mu}_i| \leq g/2$.
    
    By the second application of the concentration results, for all $i \in [|S|]$, $|\mu_i - \hat{\mu}_{\pi_*^{-1}(i)}| = |\mu_i - \tilde{\mu}_i| \leq g/2 < g$.  So from the definition of $g$ it's clear that $\mu_i \geq t$ iff $\hat{\mu}_{\pi_*^{-1}}(i) \geq t$.  Hence, $i \in I_t$ iff $\pi_*^{-1}(i) \in \hat{I}_t$.
    
    If $i \in I_t$, $\mu_i \geq t$.  Hence $\hat{\mu}_{\pi_*^{-1}(i)} \geq \mu_i - g/2 > \mu_i - g \geq t > 0$.  This means each diagonal element of $\hat{D}_t$ is positive, hence it's invertible.
    
    By part (a), if we define $\Pi_{t^*}$ to be the restriction of $\Pi_*$ to the indices $I_t$ and $\hat{I}_t$, then $\Pi_{t^*}$ is still a permutation matrix.  Furthermore, $E_t = (\Pi_*\hat{M}\Pi_*^T - M)_t = \Pi_{t^*}\hat{M}_t\Pi_{t^*}^T - M_t = \tilde{M}_t - M_t$.
    
    Then $\|E_t\|_F^2 = \sum_{i \in I_t} \|\tilde{M}_t(i, \cdot) - M_t(i,\cdot)\|_2^2 \leq \sum_{i \in I_t} \|\tilde{M}(i, \cdot) - M(i,\cdot)\|_2^2 \leq \epsilon^2 |I_t|$.
    
    Similarly, $\|f_+(\Delta_t)^{1/2}\|_F^2 = \sum_{i \in I_t} |\mu_i - \tilde{\mu}_i| \leq \sum_{i \in I_t} \epsilon \mu_i \leq \epsilon$.
    
    To derive the last inequality, note that $|\mu_i - \tilde{\mu}_i| \leq \epsilon \mu_i$ implies $(1-\epsilon)\mu_i \leq \tilde{\mu}_i \leq (1+\epsilon)\mu_i$.  Therefore
    
    \begin{align*}
        \|(D_t + \Delta_t)^{-1/2} - D_t^{-1/2}\|_F^2 & = \sum_{i \in I_t} \left(\frac{1}{\sqrt{\tilde{\mu}_i}} - \frac{1}{\sqrt{\mu_i}} \right)^2\\
        & = \sum_{i \in I_t} \frac{\tilde{\mu}_i + \mu_i - 2\sqrt{\tilde{\mu}_i \mu_i}}{\tilde{\mu}_i \mu_i} \\
        & \leq \sum_{i \in I_t} \frac{(1+\epsilon)\mu_i + \mu_i - 2\sqrt{(1-\epsilon)\mu_i \mu_i}}{(1-\epsilon)\mu_i \mu_i} \\
        & \leq \frac{|I_t|}{t} * \frac{2+\epsilon - 2\sqrt{1-\epsilon}}{1-\epsilon} \\
        & \leq \frac{|I_t|}{t} * \frac{3\epsilon}{1-\epsilon}\\
        & \leq \frac{4\epsilon|I_t|}{t}
    \end{align*}
    
    where the second last inequality uses $\sqrt{1-\epsilon} \geq 1 - \epsilon$ for $1 > \epsilon > 0$.
    
\end{proof}

\begin{lemma}\label{lemma:friendlybound}
    If $L_t$ is $(\alpha, \beta)$-friendly for sufficiently large $\alpha$ and $\beta$, the matrix $\hat{L}_t$ is friendly if it is well-defined.
\end{lemma}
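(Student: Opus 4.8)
The plan is to reduce the claim to a perturbation statement in the original state space $\mathcal{M}$ and then invoke two classical matrix perturbation inequalities. First, I would use part (a) of Lemma~\ref{lemma:inequalities}: since $\pi_*^{-1}(I_t)=\hat I_t$, the restriction $\Pi_{t^*}$ of $\Pi_*$ to the index sets $I_t$ and $\hat I_t$ is a genuine permutation matrix, and $\hat L_t = \Pi_{t^*}^T \tilde L_t \Pi_{t^*}$, where $\tilde L_t := \tilde D_t^{1/2}\tilde M_t\tilde D_t^{-1/2}$ is the empirical rescaled chain mapped back into $\mathcal M$ (with $\tilde M_t = M_t+E_t$ and $\tilde D_t = D_t+\Delta_t$). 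Conjugation by a permutation preserves singular values and only permutes the rows of the right singular matrix, so $\hat V^T\mathbf 1$ is a reordering of $\tilde V^T\mathbf 1$; hence $\hat L_t$ is friendly iff $\tilde L_t$ is, and it suffices to show that $\tilde L_t$ inherits friendliness from $L_t$.

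Next I would bound the perturbation $\eta := \|\tilde L_t - L_t\|_F$. Expanding $\tilde D_t^{1/2}\tilde M_t\tilde D_t^{-1/2} - D_t^{1/2}M_t D_t^{-1/2}$ by a telescoping triangle-inequality argument and controlling each factor with the three estimates of Lemma~\ref{lemma:inequalities}—namely $\|E_t\|_F\le\epsilon\sqrt{|I_t|}$, $\|f_+(\Delta_t)^{1/2}\|_F\le\sqrt\epsilon$, and $\|(D_t+\Delta_t)^{-1/2}-D_t^{-1/2}\|_F\le 2\sqrt{\epsilon|I_t|/t}$—together with $\|D_t^{1/2}\|\le 1$, $\|D_t^{-1/2}\|\le t^{-1/2}$, and the fact that $L_t$ is a principal submatrix of $L$ so $\|L_t\|\le\sigma_1(L)=1$ by Proposition~\ref{prop:fill}, gives a routine bound of the form $\eta = O\big(\mathrm{poly}(|I_t|,1/t)\,\sqrt\epsilon\big)$. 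Thus $\eta$ can be made smaller than any prescribed threshold by taking $\epsilon$ small, i.e. by increasing the sample size $m$.

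With $\eta$ under control I would verify the two friendliness conditions. By Weyl's inequality for singular values, $|\sigma_i(\tilde L_t)-\sigma_i(L_t)|\le\eta$, so every consecutive gap satisfies $\sigma_i(\tilde L_t)-\sigma_{i+1}(\tilde L_t) > \alpha - 2\eta > 0$ whenever $\alpha>2\eta$; hence $\tilde L_t$ has distinct singular values. For the second condition I would pass to $L_t^TL_t=V\Sigma^2 V^T$, whose eigenvalues $\sigma_i^2$ have consecutive gaps $\sigma_i^2-\sigma_{i+1}^2=(\sigma_i-\sigma_{i+1})(\sigma_i+\sigma_{i+1}) > \alpha^2$ (using $\sigma_i>\alpha$, which follows from the gap hypothesis and $\sigma_{i+1}\ge 0$). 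Since $\|\tilde L_t^T\tilde L_t - L_t^TL_t\|_F = O(\eta)$, the Davis--Kahan variant of \citet{yu2015useful} bounds each right singular vector as $\|\tilde v_i - v_i\| = O(\eta/\alpha^2)$ once orientations are fixed so that $\tilde v_i^T\mathbf 1\ge 0$. Consequently $\tilde v_i^T\mathbf 1 \ge v_i^T\mathbf 1 - \|\tilde v_i - v_i\|\sqrt{|I_t|} > \beta - O(\eta\sqrt{|I_t|}/\alpha^2)$, which is strictly positive once $\beta$ is large relative to this perturbation. Both friendliness conditions then hold for $\tilde L_t$, hence for $\hat L_t$.

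I expect the singular-vector step to be the main obstacle. Weyl's inequality dispatches the gap condition directly, but converting an eigenspace-angle estimate into a coordinatewise guarantee on $\tilde v_i^T\mathbf 1$ demands care: singular vectors are fixed only up to sign, so I must choose orientations consistently with the global convention $V^T\mathbf 1\ge 0$, and I must track that the relevant separation is the eigengap of $L_t^TL_t$ rather than the singular-value gap of $L_t$ itself—this is exactly where Proposition~\ref{prop:fill}'s bound $\sigma_1(L)=1$ keeps $\sigma_i+\sigma_{i+1}$ from collapsing the gap below $\alpha^2$. Finally, ``sufficiently large $\alpha$ and $\beta$'' should be read as the two quantitative conditions $\alpha>2\eta$ and $\beta = \Omega(\eta\sqrt{|I_t|}/\alpha^2)$; for fixed structural constants these amount to choosing $\epsilon$ (equivalently $m$) so that the perturbation $\eta$ is small, which is how this lemma feeds into the sample-complexity bound of Theorem~\ref{thm:main}.
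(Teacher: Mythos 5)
Your proposal follows essentially the same route as the paper's proof: reduce to the unpermuted empirical matrix $\tilde L_t$, telescope the perturbation into three factors bounded via Lemma~\ref{lemma:inequalities} to get $\|\tilde L_t - L_t\|_F = O(\mathrm{poly}(|I_t|,1/t)\sqrt{\epsilon})$, control the singular-value gaps by a perturbation inequality (you use Weyl where the paper uses Wielandt--Hoffman, which is immaterial here), and apply the Davis--Kahan variant of \citet{yu2015useful} to $L_t^T L_t$ with eigengap $\alpha^2$ to preserve the sign condition $\tilde v_i^T\mathbf{1}>0$. The argument is correct; the one small slip is attributing the eigengap lower bound to $\sigma_1(L)=1$ from Proposition~\ref{prop:fill} (that upper bound is actually used in the Davis--Kahan numerator), but you already derived the gap correctly from $\sigma_i>\alpha$.
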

\begin{proof}
    Observe that $\tilde{L}_t := \Pi_{t^*}\hat{L}_t\Pi_{t^*}^T = (D_t + \Delta_t)^{1/2}(M_t + E_t)(D_t + \Delta_t)^{-1/2}$, so it suffices to show this matrix is friendly.
    
    We need the following bounds, utilizing the inequality $\sqrt{a+b} - \sqrt{a} \leq \sqrt{|b|}$:
    
    \begin{align*}
        \|(D_t + \Delta_t)^{1/2} - D_t^{1/2}\|_F & \leq \|f_+(\Delta_t)^{1/2}\|_F \leq \sqrt{\epsilon} \\
        \|D_t^{-1/2}\|_F & \leq \sqrt{\frac{|I_t|}{t}}\\
        \|(D_t + \Delta_t)^{1/2}\|_F & \leq \|D_t^{1/2}\|_F + \|f_+(\Delta_t)^{1/2}\|_F \leq 1 + \sqrt{\epsilon}\\
        \|M_t\|_F & \leq \sqrt{|I_t|}\\
        \|M_t + E_t\|_F & \leq (1 + \epsilon)\sqrt{|I_t|}\\
    \end{align*}
    
    Decompose the perturbation of $L_t$ as
    
    \begin{align*}
        \tilde{L}_t - L_t & = (D_t + \Delta_t)^{1/2}(M_t+E_T)((D_t + \Delta_t)^{-1/2}-D_t^{-1/2}) \\
        & + (D_t + \Delta_t)^{1/2}(M_t + E_t - M_t)D_t^{-1/2}\\
        & + ((D_t + \Delta_t)^{1/2} - D_t^{1/2})M_tD_t^{-1/2}
    \end{align*}
    
    Then we apply the inequalities above, using the triangle inequality and submultiplicativity to obtain
    
    \begin{align*}
        \|\tilde{L}_t - L_t\|_F & \leq \frac{16 \sqrt{\epsilon} |I_t|}{\sqrt{t}} + \frac{2\epsilon |I_t|}{\sqrt{t}} + \frac{\sqrt{\epsilon} |I_t|}{\sqrt{t}}\\
        & \leq \frac{19 \sqrt{\epsilon} |I_t|}{\sqrt{t}}
    \end{align*}
    
    Now decompose $L_t = U\Sigma V^T$ and $\tilde{L}_t = \tilde{U} \tilde{\Sigma} \tilde{V}^T$.

    By the Wielandt-Hoffman inequality~\citep{hoffman2003variation}, $\sum_i (\sigma_i(\tilde{L}_t) - \sigma_i(L_t))^2 \leq \|\tilde{L}_t - L_t\|_F^2$.  Therefore, if $\alpha = \min_i \sigma_{i}(L_t) - \sigma_{i+1}(L_t) > 2\|\tilde{L}_t - L_t\|_F$, then $\sigma_{i}(\tilde{L}_t) - \sigma_{i+1}(\tilde{L}_t) > 0$.
    
    By the Cauchy interlacing theorem and Propostion \ref{prop:fill}, $\sigma_1(L_t) \leq \sigma_1(L) = 1$.  And $\min_i \sigma_{i}(L_t)^2 - \sigma_{i+1}(L_t)^2 \geq \alpha^2$.
    
    Therefore, we can apply the Davis-Kahn theorem~\citep{yu2015useful} to conclude $1 - |\tilde{v}_i^Tv_i| \leq \zeta$ where
    \begin{align*}
        \zeta := \left(\frac{2\left(2+ \frac{19 \sqrt{\epsilon} |I_t|}{\sqrt{t}}\right) \frac{19 \sqrt{\epsilon} |I_t|}{\sqrt{t}}}{\alpha^2} \right)
    \end{align*} 
    
    Orienting $\tilde{V}$ so that $\tilde{V}^T\mathbf{1} \geq 0$, it follows $|\tilde{v}_i^Tv_i| = \tilde{v}_i^Tv_i$.

If $\beta > \sqrt{2|I_t|\zeta}$, then the friendliness assumption implies $v_i^T\mathbf{1} > \sqrt{2|I_t|\zeta}$ and therefore
\begin{align*}
    \tilde{v}_i^T\mathbf{1} & \geq  v_i^T\mathbf{1} - |v_i^T\mathbf{1} - \tilde{v}_i^T\mathbf{1}|\\
    & > \sqrt{2|I_t|\zeta} - \|\mathbf{1}\|_2\|v_i - \tilde{v}_i\|_2 \\
    & = \sqrt{2|I_t|\zeta} - \sqrt{|I_t|}\sqrt{1 + 1 - 2\tilde{v}_i^Tv} \\
    & > \sqrt{2|I_t|\zeta} - \sqrt{2|I_t|\zeta}\\
    & > 0
\end{align*}
\end{proof}

Now we can recover $\Pi_{t^*}$, using the decomposition $\hat{L}_t = \hat{U}\tilde{\Sigma}\hat{V}^T$.

\begin{lemma}\label{lemma:uniqueperm}
    Under the same assumptions as Lemma \ref{lemma:friendlybound}, if $|I_t|\zeta < \frac{1}{2}$, the unique permutation matrix $\Pi$ such that $\|\Pi^T V - \hat{V}\|_F \leq \sqrt{2|I_t|\zeta}$ is $\Pi_{t^*}$.
\end{lemma}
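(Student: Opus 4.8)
The plan is to prove the two halves of the statement separately: first that $\Pi_{t^*}$ satisfies the stated bound (existence), then that no other permutation can (uniqueness). For existence, I would start from the relation established in Lemma \ref{lemma:friendlybound}, namely $\tilde{L}_t := \Pi_{t^*}\hat{L}_t\Pi_{t^*}^T$. Writing the SVD $\hat{L}_t = \hat{U}\tilde{\Sigma}\hat{V}^T$ gives $\tilde{L}_t = (\Pi_{t^*}\hat{U})\tilde{\Sigma}(\Pi_{t^*}\hat{V})^T$, so the right singular vectors satisfy $\tilde{V} = \Pi_{t^*}\hat{V}$, equivalently $\hat{V} = \Pi_{t^*}^T\tilde{V}$. (The sign orientation is consistent: since $\Pi_{t^*}^T\mathbf{1} = \mathbf{1}$ we have $\tilde{V}^T\mathbf{1} = \hat{V}^T\mathbf{1}$, so the convention $V^T\mathbf{1}\geq 0$ transfers correctly.) The Davis--Kahan step inside Lemma \ref{lemma:friendlybound} produces the per-column bound $\tilde{v}_i^Tv_i \geq 1-\zeta$, whence $\|v_i - \tilde{v}_i\|_2^2 = 2 - 2\tilde{v}_i^Tv_i \leq 2\zeta$; summing over the $|I_t|$ columns yields $\|V - \tilde{V}\|_F \leq \sqrt{2|I_t|\zeta}$. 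Since permutation matrices are orthogonal and thus preserve the Frobenius norm, $\|\Pi_{t^*}^T V - \hat{V}\|_F = \|\Pi_{t^*}^T(V - \tilde{V})\|_F = \|V - \tilde{V}\|_F \leq \sqrt{2|I_t|\zeta}$, so $\Pi_{t^*}$ indeed meets the bound.

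For uniqueness, suppose some permutation $\Pi \neq \Pi_{t^*}$ also satisfies $\|\Pi^T V - \hat{V}\|_F \leq \sqrt{2|I_t|\zeta}$. Combining with the existence bound via the triangle inequality gives $\|\Pi^T V - \Pi_{t^*}^T V\|_F \leq 2\sqrt{2|I_t|\zeta} < 2$, where the strict inequality uses the hypothesis $|I_t|\zeta < 1/2$. I would then contradict this with a lower bound. Because $L_t$ is friendly it is a full-rank $|I_t|\times|I_t|$ matrix, so $V$ is square orthogonal and its \emph{rows} form an orthonormal set; any two distinct rows are therefore at squared Euclidean distance $2$. Left-multiplication by $\Pi^T$ merely permutes the rows of $V$, and since $\Pi \neq \Pi_{t^*}$ the induced permutations disagree at at least two indices (permutations cannot differ in exactly one position). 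Each disagreeing index contributes a squared row-distance of $2$, so $\|\Pi^T V - \Pi_{t^*}^T V\|_F^2 \geq 4$, i.e. $\|\Pi^T V - \Pi_{t^*}^T V\|_F \geq 2$. This contradicts the strict upper bound $<2$, so the only permutation within the stated radius of $\hat{V}$ is $\Pi_{t^*}$, proving the claim.

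The main technical point is the exact matching of constants: the factor $2$ from the triangle inequality on the upper side and the bound $\geq 2$ from \emph{two} disagreeing orthonormal rows on the lower side must be aligned precisely against the hypothesis $|I_t|\zeta < 1/2$, which is exactly what makes $2\sqrt{2|I_t|\zeta} < 2$ and closes the contradiction. The other subtlety I would be careful about is the singular-vector orientation bookkeeping, ensuring the \emph{signed} inequality $\tilde{v}_i^Tv_i \geq 1-\zeta$ (rather than its absolute-value version) is legitimately available; this is guaranteed by the orientation convention $V^T\mathbf{1},\tilde{V}^T\mathbf{1}\geq 0$ set up in Lemma \ref{lemma:friendlybound}, and transfers to $\hat{V}$ because $\Pi_{t^*}$ fixes $\mathbf{1}$.
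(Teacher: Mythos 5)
Your proof is correct and follows essentially the same route as the paper's: existence via $\hat{V}=\Pi_{t^*}^T\tilde{V}$ (the content of Proposition~\ref{svdperm}) combined with the Davis--Kahan per-column bound, and uniqueness via a triangle inequality pitted against the Frobenius separation of distinct permutation matrices. If anything your constant accounting is the more careful one: the separation between distinct permutation matrices is $2$ (two disagreeing orthonormal rows), which is exactly what makes the hypothesis $|I_t|\zeta<\tfrac12$ suffice against the upper bound $2\sqrt{2|I_t|\zeta}<2$, whereas the paper's write-up compares $2\sqrt{2|I_t|\zeta}$ against a separation of $\sqrt{2}$, a chain that as written would require $|I_t|\zeta<\tfrac14$.
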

\begin{proof}
    By Proposition \ref{svdperm}, and the friendliness of $\hat{L}_t$ and $\tilde{L}_t$, we have that $\Pi_{t^*}^T\tilde{V} = \hat{V}$.  It follows that
    
    \begin{align*}
        \|\Pi_{t^*}^TV - \hat{V}\|_F & \leq \|\Pi_{t^*}^TV - \Pi_{t^*}^T\tilde{V}\|_F + \|\Pi_{t^*}^T\tilde{V} - \hat{V}\|_F\\
        & = \|V - \tilde{V}\|_F
    \end{align*}
    
    And note that $\|V-\tilde{V}\|_F^2 = \sum_i \|v_i - \tilde{v}_i\|_2^2 = \sum_i 2 - 2v_i^T\tilde{v}_i \leq 2|I_t|\zeta$.
    
    Conversely, 
    
    \begin{align*}
    \|\Pi - \Pi_{t^*}\|_F & = \|\Pi^T\tilde{V} - \Pi_{t^*}^T\tilde{V}\|_F\\
    & \leq \|\Pi^T\tilde{V} - \Pi^TV\|_F + \|\Pi^TV - \hat{V}\|_F + \|\hat{V} - \Pi_{t^*}^T\tilde{V}\|_F\\
    & \leq \sqrt{2|I_t|\zeta} + \sqrt{2|I_t|\zeta}\\
    & < \sqrt{2}
    \end{align*}
    
    Because distinct permutation matrices differ in Frobenius norm by at least $\sqrt{2}$, this guarantees $\Pi = \Pi_{t^*}$.
\end{proof}

\begin{proof}[Proof of Theorem \ref{thm:main}]

For a given $t$, suppose $D_t^{1/2}M_tD_t^{-1/2}$ is $(\alpha, \beta)$-friendly.  Then we choose $\epsilon$ to satisfy the following:

\begin{enumerate}
    \item $\alpha > 2\|\tilde{L}_t - L_t\|_F$
    \item $\beta > \sqrt{2|I_t|\zeta}$
    \item $|I_t|\zeta < 1/2$
\end{enumerate}

We observe these are all satisfied at $\sqrt{\epsilon} = O\left(\frac{\alpha^2 \beta^2 \sqrt{t}}{|I_t|^2} \right)$

If $m = poly\left(\frac{1}{\alpha}, \frac{1}{\beta}, \frac{1}{t}, |I_t|, \frac{1}{g}, \frac{1}{\gamma_{ps}(M)}, \log D_{p_0}, \log |S|, \log \frac{1}{\delta} \right)$, then Lemma \ref{lemma:inequalities} and \ref{lemma:friendlybound} imply the estimator $\hat{L}_t$ is friendly with probability at least $1-\delta$.  So by Lemma \ref{lemma:uniqueperm}, we conclude the permutation $\Pi'$ recovered from the Hungarian algorithm in Algorithm \ref{alg:ppl} agrees with $\Pi_*$ on $I_t$ and $\hat{I}_t$.  Finally, Theorem \ref{ipmboundthm} bounds the imitation objective.

Lastly, we rewrite the sample complexity, using the fact that $D_{p_0} \leq \frac{1}{1-\gamma}$ from the definition of $\mu$.  We also note that from Proposition \ref{prop:fill}, in the exact recovery setting $\min_i \mu_i > t$, we may replace $\gamma_{ps}$ in the sample complexity with $1 - \sigma_2(L_t)^2$.

\end{proof}

\end{document}